\documentclass[10pt]{article} 
\usepackage[preprint]{tmlr}


\usepackage{amsmath,amsfonts,bm}









\def\eqref#1{equation~\ref{#1}}









\def\1{\bm{1}}










\DeclareMathAlphabet{\mathsfit}{\encodingdefault}{\sfdefault}{m}{sl}
\SetMathAlphabet{\mathsfit}{bold}{\encodingdefault}{\sfdefault}{bx}{n}













\DeclareMathOperator*{\argmax}{arg\,max}

\usepackage[hidelinks]{hyperref}
\usepackage{url}

\usepackage{microtype}
\usepackage{graphicx}
\usepackage{subfigure}
\usepackage{booktabs} 

\newcommand{\card}[1]{\left\lvert#1\right\rvert}
\newcommand{\bs}{\boldsymbol}

\newcommand{\XX}{ \mathcal{X}}


\usepackage{algorithm}
\usepackage{algorithmic}
\let\cite\citep
\usepackage{amssymb}
\usepackage{bbm}
\usepackage{bm}
\usepackage{stmaryrd}
\usepackage{amsmath}
\usepackage{amsthm}
\usepackage[capitalise]{cleveref}

\newtheorem{lem}{Lemma}
\newtheorem{rem}{Remark}
\newtheorem{fac}{Fact}
\newtheorem{assumption}{Assumption}

\ifodd 0
\newcommand{\rev}[1]{{\color{blue}#1}}
\else
\newcommand{\rev}[1]{#1}
\fi

\ifodd 0
\newcommand{\revn}[1]{{\color{blue}#1}}
\else
\newcommand{\revn}[1]{#1}
\fi

\ifodd 0
\newcommand{\com}[1]{{\color{red}#1}}
\else
\newcommand{\com}[1]{}
\fi

\usepackage{thmtools}
\usepackage{thm-restate}

\title{Contextual Combinatorial Multi-output GP Bandits with Group Constraints}


\author{\name Sepehr Elahi$^\dag$\thanks{The majority of the work was done when Sepehr Elahi, Baran Atalar and Sevda Öğüt were with the Department of Electrical and Electronics Engineering of Bilkent University.}\email \href{mailto:sepehr.elahi@epfl.ch}{sepehr.elahi@epfl.ch} \\
      \addr School of Computer and Communication Sciences\\
      EPFL
      \ANDD
      \name Baran Atalar$^\dag$$^*$ \email \href{mailto:batalar@andrew.cmu.edu}{batalar@andrew.cmu.edu} \\
      \addr Department of Electrical and Computer Engineering \\
      Carnegie Mellon University
      \ANDD
      \name Sevda Öğüt\thanks{Equal contribution.} $^*$ \email \href{mailto:sevda.ogut@epfl.ch}{sevda.ogut@epfl.ch}\\
      \addr School of Computer and Communication Sciences\\
      EPFL
      \ANDD
      \name Cem Tekin \email 
      \href{mailto:cemtekin@ee.bilkent.edu.tr}{cemtekin@ee.bilkent.edu.tr}\\
      \addr Department of Electrical and Electronics Engineering\\
      Bilkent University}



\begin{document}

\maketitle

\begin{abstract}
    In federated multi-armed bandit problems, maximizing global reward while satisfying minimum privacy requirements to protect clients is the main goal. To formulate such problems, we consider a combinatorial contextual bandit setting with groups and changing action sets, where similar base arms arrive in groups and a set of base arms, called a super arm, must be chosen in each round to maximize super arm reward while satisfying the constraints of the rewards of groups from which base arms were chosen. To allow for greater flexibility, we let each base arm have two outcomes, modeled as the output of a two-output Gaussian process (GP), where one outcome is used to compute super arm reward and the other for group reward. We then propose a novel double-UCB GP-bandit algorithm, called Thresholded Combinatorial Gaussian Process Upper Confidence Bounds (TCGP-UCB), which balances between maximizing cumulative super arm reward and satisfying group reward constraints and can be tuned to prefer one over the other. We also define a new notion of regret that combines super arm regret with group reward constraint satisfaction and prove that TCGP-UCB incurs \revn{$\tilde{O}(\sqrt{\lambda^*(K)KT\overline{\gamma}_{T}} )$} regret with high probability, where $\overline{\gamma}_{T}$ is the maximum information gain associated with the set of base arm contexts that appeared in the first $T$ rounds and $K$ is the maximum super arm cardinality over all rounds. We lastly show in experiments using \rev{synthetic and real-world data and based on a federated learning setup as well as a content-recommendation one that our algorithm performs better then the current non-GP state-of-the-art combinatorial bandit algorithm, while satisfying group constraints.}
\end{abstract}

\section{Introduction}
The multi-armed bandit problem is a prominent example of reinforcement learning that studies the sequential interaction of a learner with its environment under partial feedback \cite{robbins1952some}. Out of the many variants of the multi-armed bandit problem, the combinatorial and contextual bandits have been thoroughly investigated due to their rich set of real-life applications. In combinatorial bandits, the learner selects a super arm, which is a subset of the available base arms in each round \cite{chen2013combinatorial,cesa2012combinatorial}. In the semi-bandit feedback model, at the end of the round, the learner observes the outcomes of the base arms in the selected super arm together with the super arm reward. In contextual bandits, at the beginning of every round, the learner observes side-information about the outcomes of the base arms available in that round before deciding which arm to select \cite{slivkins2011contextual,lu2010contextual}. In the changing action set variant, there is varying base arm availability in each round. Combinatorial bandits and contextual bandits are deployed in applications ranging from influence maximization in a social network to news article recommendation \cite{li2010contextual, chen2016prbotrigger}. 

In this paper, we consider contextual combinatorial multi-armed bandits with changing action sets (C3-MAB), which models the repeated interaction between an agent and its dynamically changing environment. In each round $t$, the agent observes the available base arms and their contexts, selects a subset of the available base arms, which is called a super arm, collects a reward, and observes noisy outcomes of the selected base arms. We consider the scenario \rev{in which} the base arm availability in each round changes in an arbitrary fashion, known as the changing action sets setting. Therefore, we analyze the regret under any given sequence of base arm availabilities. On the other hand, given a particular base arm and its context, the outcome of the base arm is assumed to come from a fixed distribution parameterized by the arm's context. The goal is to maximize the cumulative reward in a given number of rounds without knowing future context arrivals and the function that maps the contexts to base arm outcomes. Achieving this goal requires careful tuning of exploration and exploitation by adapting decisions in real-time based on the problem structure and past history. What is new in our setup is the concept of groups, which are unions of base arms that share a common feature. Our goal is to maximize the cumulative super arm reward while ensuring that the groups comprising the super arm satisfy their constraints.

It is essential to place assumptions on the function mapping contexts to outcomes (i.e., the function that needs to be learned, say $f$). While other C3-MAB works such as \cite{nika2020contextual, chen2016combinatorial, chen2018contextual} place explicit smoothness restrictions on $f$, we make use of Gaussian processes (GP), which induce a smoothness assumption. Moreover, the posterior mean and variance resulting from GP gives us high probability confidence bounds on the expected outcome function which leads to a desirable balance between exploration and exploitation \cite{srinivas2012information}.

Our work, and specifically introduction of groups, is motivated by the need to balance between privacy and reward in a federated learning setup. Federated learning (FL) is a distributed machine learning method that allows for training a model across a decentralized ensemble of clients. In FL, clients train a model on their local data and then send their local model to a centralized server. This helps protect the privacy of clients as they do not share their data directly with the server \cite{pmlr-v54-mcmahan17a}. Our work introduces a setup that can handle a privacy aware FL, where clients have privacy requirements on the maximum amount of data leakage through local model or weight transfer to the server. This is important, as some portions of a client's training data can be retrieved just from their shared gradients with the server \cite{privacy_leakage,chen2021understanding}.

In our setup, base arms represent client-request pairs, groups are clients, and super arms are subsets of client-request pairs. For each such pair, the context is related to the amount of information that the client is to share with the server, specifically, it is the used data set percentage which determines the privacy level of the client. Our aim is to satisfy clients' privacy requirements while maximizing the information obtained from the client's trained model. There is a trade-off between these two goals as using a higher percentage of data in training leads to less privacy for clients due to information leakage but a greater amount of information is retrieved from the data by the server. This is also mentioned in \cite{truong2021fedprivacy} as a trade-off between efficiency and privacy guarantee and in \cite{yang2019tradeoff} as a trade-off between learning performance and privacy of clients.

This trade-off has motivated us to model the expected outcome function as a sample from a two-output GP, in order to make use of the inherent correlation between privacy leakage and the information retrieved. The details of how a two-output GP helps us as well as a detailed explanation of the application of our problem formulation to a privacy aware federated learning is given in Section \ref{motivation_sec}.

\subsection{Our contributions}
Our contributions are: \textbf{(i)} We propose a new C3-MAB problem with groups and group constraints that is applicable to real-life problems, including a privacy-aware federated learning setup where the goal is to maximize overall reward while respecting users' varying privacy requirements. \textbf{(ii)} We propose a new notion of regret called group regret, and define total regret as a weighted combination of group and super arm regret in terms of  $\zeta \in [0,1]$. \textbf{(iii)} We use a two-output GP which allows for modeling setups where overall reward and group reward are functions of two different but correlated base arm outcomes. \textbf{(iv)} We use a double UCB approach in our algorithm by having different exploration bonuses for groups and super arms which is dependent on a tunable parameter. \textbf{(v)} We derive an information theoretic regret bound for our algorithm given by \revn{$\tilde{O}(\sqrt{\lambda^*(K)KT\overline{\gamma}_{T}})$}. By assuming a fixed cardinality for every super arm, we express our regret bound in terms of the classical maximum information gain $\gamma_T$ which is given as $\tilde{O}(\sqrt{\lambda^*(K)KT{\gamma}_{T}})$. \rev{We also provide kernel-dependent upper bounds of our regret for standard GP kernels.}

\subsection{Related Work}

The combinatorial multi-armed bandit problem has been thoroughly investigated by using upper confidence bounds \cite{gai2012combinatorial,chen2013combinatorial,chen2016prbotrigger} and Thompson sampling algorithms \cite{pmlr-v80-wang18a}. While we also use a UCB type algorithm, we have groups and group constraints and we operate in a contextual combinatorial setting instead of a solely combinatorial one. In terms of the regret bounds, \cite{chen2013combinatorial, chen2016prbotrigger, pmlr-v80-wang18a} incur $O(\log T)$ gap-dependent regret whereas we incur $\tilde{O}(\sqrt{T})$ gap-independent regret. Another variant of the classical bandit problem which emerged in recent years is the contextual combinatorial multi-armed bandit problem (CC-MAB) \cite{li2016contextual,qin2014contextual}. In the work of \cite{li2016contextual}, the learner can observe the reward of the super arm and the rewards of a subset of the base arms in the super arm selected due to cascading feedback according to some stopping criteria. In the work of \cite{qin2014contextual}, the contextual combinatorial MAB problem is applied to online recommendation.  

The contextual combinatorial multi-armed bandits with changing action sets problem has been explored in \cite{chen2018contextual,nika2020contextual}. In the work of \cite{chen2018contextual}, the super arm reward is submodular and since the context space is infinite, they form a partition of the context space with hypercubes depending on context information thus addressing the varying availability of arms by exploiting the similarities between contexts in the same hypercube. The work done by \cite{nika2020contextual} makes use of adaptive discretization instead of fixed discretization which addresses the limited similarity information of the arms that can be gathered by fixed discretization. 

GP bandits have been utilized in the \rev{combinatorial MAB} and contextual MAB setup as well as for adaptive discretization \cite{article,KrauseContextualGP,shekhar2018gaussian}. In the combinatorial MAB setting of \cite{article}, it has been shown that discretization works in small action spaces and GPs are used to estimate the model. While adaptive discretization has been shown to be working well in large context spaces and setups with changing action sets, in the case where the number of base arms is finite, assuming the expected base arm outcomes are a sample from a GP could be a better approach as it eliminates the need for the explicit assumption that the expected base arm outcomes are Lipschitz continuous. In this paper we also use the smoothness induced by a GP instead of performing explicit discretization. A comparison of our problem setup with other similar works can be found in Table \ref{tab:comp}.

The thresholding multi-armed bandit problem has been studied in \cite{Locatelli16Threshold,Mukherjee17Threshold}. These works consider thresholding as having the mean of a base arm be above a certain value. In \cite{Reverdy17}, there is the notion of satisficing instead of thresholding, which is a combination of satisfaction, which is the learner's desire to have a reward above a threshold and sufficiency, meaning to have satisfaction for base arms at a certain level of confidence. To our knowledge, there is no previous research which deals with group thresholding or a thresholding setup within the C3-MAB problem. \rev{A similar line of research includes that of safety in the multi-armed bandit problem where the action set is constrained such that a safety constraint is satisfied \cite{amani2019linear,wang2021best}. However, their definition of safety is different than the notion of satisfying that we use in the paper. In these papers, the safety constraint is imposed such that it allows for no violation of the constraint whereas we can violate our threshold in some steps, which leads us to incur regret. The tradeoff however is that we are able to explore faster compared to these works though it is not fair to make a comparison as our setup and results are not exactly comparable due to different notions of thresholding and also because we are not doing best arm identification. Comparing with the regret bounds of \cite{amani2019linear} they achieve $\tilde{O}(\sqrt{T})$ regret (same as ours) when the safety gap is positive and $\tilde{O}(T^{2/3})$ regret when the safety gap is zero. }

Federated learning was proposed by \cite{pmlr-v54-mcmahan17a} as a technique to address privacy and security concerns regarding centralized training performed in a main server by collecting private data from the clients. To solve this issue, they propose the FedAvg algorithm that performs local stochastic gradient descent for each client and averages the models from the clients on the main server to update the global model. The work in \cite{fed_differential} discusses information leakage in federated learning and presents the algorithm NbAFL that works based on the concept of differential privacy by adding artificial noises to the parameters of the clients before aggregating the model. Note that although our setup is motivated by privacy aware FL, our work is not positioned in the FL space and thus our algorithm competes with C3-MAB algorithms and not the likes of NbAFL.

\begin{table}[t]
	\centering
	\caption{Comparison of our problem setting with previous works (CC stands for contextual-changing action sets).}
\label{tab:comp}
 \resizebox{\columnwidth}{!}{ 
\begin{tabular}{lllllll}
	\toprule
	\textbf{Work} & \textbf{Context space} & \textbf{CC} & \textbf{Smoothness} & \textbf{Groups} & \textbf{Group thresholds}  \\\midrule
	\cite{chen2013combinatorial} & Finite & No & Explicit & No & No\\
	\cite{chen2018contextual} & Infinite & Yes & Explicit & No & No\\
	\cite{KrauseContextualGP} & Compact & No & GP-induced & No & No \\
	\cite{nika2020contextual} & Compact & Yes & Explicit & No & No\\\bottomrule
	This work & Compact & Yes & GP-induced & Yes & Yes \\\bottomrule
\end{tabular}
}
\end{table} 

\section{Problem Formulation}

\subsection{Base Arms and Base Arm Outcomes}

The sequential decision-making problem proceeds over $T$ rounds indexed by $t \in [T] := \{1,\ldots,T\}$. In each round $t$, $M_{t}$ base arms indexed by the set $\mathcal M_{t} = [M_{t}]$ arrive. Cardinality of the set of available base arms in any round is bounded above by $M < \infty$. Each base arm $m \in \mathcal M_{t}$ comes with a context $x_{t,m}$ that resides in the context set $\mathcal{X}$. The set of available contexts in round $t$ is denoted by $\mathcal X_{t} = \{x_{t,m}\}_{m \in \mathcal M_{t}}$. 

When selected, a base arm with context $x$ yields a two-dimensional random outcome $\bs r(x) \in \mathbb{R}^2$. The two-dimentionality of the outcome will be motivated by groups in Section \ref{sec:groups}. \rev{We will refer to the first and second component of $\bs r(x)$ by $r_1(x)$ and $r_2(x)$, respectively. From hereon, we refer to the individual components of a vector using subscripts.} Then, the expected outcome function that is unknown to the learner is represented by $\bs f : \mathcal{X} \rightarrow \mathbb{R}^2$, whose form will be described in detail in Section \ref{sec:base_arms}. We then define the random outcome as $\bs r(x) = \bs f(x) + \bs \eta$ where $\bs \eta \sim \mathcal{N}(\bs 0, \sigma^2 \bs I)$ represents the two-dimensional observation noise that is independent across base arms and rounds, where $\sigma$ is known. In our motivating example of federated learning, base arms represent client-request pairs while the first base arm outcome is the amount of information that the server is able to retrieve from the client while the second one is the information leakage probability associated with each client-request pair. The context associated with each base arm corresponds to its data set usage percentage.

\subsection{Super Arm and Group Rewards}\label{sec:groups}
For any vector-valued function $\bs h\colon \mathcal{X} \rightarrow \mathbb{R}^d$, $d \geq 2$, given a $k$-tuple of contexts $\bm{x} = [x_1,\ldots,x_k]$, we let $\bs h(\bm{x}) = [\bs h(x_1), \ldots, \bs h(x_{k})]$. Moreover, we let $ h_i(\bm{x}) = [h_i(x_1), \ldots, h_i(x_{k})]$, where the subscript $1 \leq i \leq d$ indicates the $i$th element. We denote by $\mathcal{S}_t \in 2^{\mathcal{M}_t}$ the set of feasible super arms in round $t$, \rev{which is given to the learner}, and by $\mathcal{S} =  \cup_{t\geq 1}\mathcal{S}_t$ the overall feasible set of super arms. We assume that the maximum number of base arms in a super arm does not exceed a fixed $K \in \mathbb{N}$ that is known to the learner. That is, for any $S \in \mathcal{S}$, we have, $|S| \leq K$. We denote the super arm chosen in round $t$ by $S_t$.

The reward of a super arm $S \in {\cal S}$ depends on the outcomes of base arms in $S$. We represent the \rev{contexts of the base arms in super arm $S$ by the context vector $\bm{x}_{t,S}$} and the reward of a super arm $S$  by random variable $U(S, r_1(\bm{x}_{t,S}))$, where $U$ is a deterministic function. Moreover, we have that the expected super arm reward is a function of only the set of base arms in $S$ and their expected outcome vector, given by $u(S, f_1(\bm{x}_{t,S})) = \mathbb{E}[U(S, r_1(\bm{x}_{t,S}))| \bs f]$.\footnote{We will suppress $S$ from $U(S, \ldots)$ and $u(S, \ldots)$ when $S$ is clear from context.} We assume that for all $S \in {\cal S}$, the expected super arm reward function $u$ is monotonically non-decreasing with respect to the expected outcome vector. We also assume that $u$ varies smoothly as a function of expected base arm outcomes. Both assumptions are standard in the C3-MAB literature \cite{nika2020contextual} and are stated formally below.

\begin{assumption}[Monotonicity]
\label{ass:s_reward_mono}
For all $S \in \mathcal{S}$ and for any $\bm{\rev{h}} = [\rev{h}_1, \ldots, \rev{h}_{|S|}]^T \in \mathbb{R}^{|S|}$ and $\bm{g} = [g_1, \ldots, g_{|S|}]^T \in \mathbb{R}^{|S|}$, if $\rev{h}_m \leq g_m$, $\forall m \leq |S|$, then $u(S,\bm{\rev{h}}) \leq u(S,\bm{g})$.
\end{assumption}

\begin{assumption}[Lipschitz continuity]
\label{ass:s_reward_lip}
For all $S \in \mathcal{S}$, there exists $B' > 0$ such that for any $\bm{\rev{h}} = [\rev{h}_1, \ldots, \rev{h}_{|S|}]^T \in \mathbb{R}^{|S|}$ and $\bm{g} = [g_1, \ldots, g_{|S|}]^T \in \mathbb{R}^{|S|}$, we have $|u(S,\bm{\rev{h}}) - u(S,\bm{g})| \leq B' \sum_{i=1}^{|S|} |\rev{h}_i - g_i|$.
\end{assumption}

Groups are defined as the sets of base arms (i.e., client-request pairs) from the same client and they have round-varying cardinalities due to changing action sets. We denote by $\mathcal{G}_t$ the set of feasible groups in round $t$ and the reward of a group $G \in {\cal G}_t$ depends on the second outcome of the base arms in $G \cap S_t$. Let $\bm{x}_{t,G}$ represent the vector of contexts of base arms in $G$.\footnote{We will drop $t$ from $\bm{x}_{t,G}$ when clear from context.} We only have one restriction on the set of arriving groups, and that is disjointness, given below.

\begin{assumption}[Group disjointness]
\label{ass:group_disjoint}
For any round $t$, we have that all groups in $\mathcal{G}_t$ are pairwise disjoint.
\end{assumption}

We represent the reward of a group $G$ by the random variable $V_G(G \cap S_t, r_2(\bm{x}_{t,G \cap S_t}))$. Notice that the reward of a group depends on the observed base arms since the observed base arm outcomes in a group are ones that belong to the chosen super arm. We stress that $V_G$ here is a deterministic function of its argument and the randomness comes from $r_2$. As typical in the CMAB literature, given the base arm outcome function $\bs f$, we assume that expected group reward is a function of only the set of base arms in $G \cap S_t$ and their expected outcome vector. Therefore, we define $v_G(G \cap S_t, f_2(\bm{x}_{t,G \cap S_t})) = \mathbb{E}[V_G(G \cap S_t, r_2(\bm{x}_{t,G \cap S_t}))|\bs f]$. Note that $V_G$ and $v_G$ are vector functions and take as input a $|G \cap S_t|$ dimensional vector.\footnote{We will suppress $G \cap S_t$ from $V_G(G \cap S_t, \ldots)$ and $v_G(G \cap S_t, \ldots)$ when clear from context.} Lastly, similar to the super arm reward function, we impose monotonicity and Lipschitz continuity assumptions on the group reward function.

\begin{assumption}[Monotonicity]
\label{ass:g_reward_mono}
For all $A \subseteq G$, for all $G \in \mathcal{G}_t$, and for all $t \in [T]$ and for any $\rev{ \bs h} = [\rev{h}_1, \ldots, \rev{h}_{|A|}]^T \in \mathbb{R}^{|A|}$ and $\bs g = [g_1, \ldots, g_{|A|}]^T \in \mathbb{R}^{|A|}$, if $\rev{h}_m \leq g_m$, $\forall m \leq |A|$, then $v_G(\rev{ \bs h}) \leq v_G(\bs g)$.
\end{assumption}

\begin{assumption}[Lipschitz continuity]
\label{ass:g_reward_lip}
For all $A \subseteq G$, for all $G \in \mathcal{G}_t$ and for all $t \in [T]$, there exists $B_A > 0$ such that for any $\bm{\rev{h}} = [\rev{h}_1, \ldots, \rev{h}_{|A|}]^T \in \mathbb{R}^{|A|}$ and $\bm{g} = [g_1, \ldots, g_{|A|}]^T \in \mathbb{R}^{|A|}$, we have $|v_G(\rev{ \bs h}) - v_G(\bm{g})| \leq B_A \sum_{i=1}^{|A|} |\rev{h}_i - g_i|$.
\end{assumption}

We define $B := \max\limits_{\forall A \subseteq G, \forall G \in {\cal G}_{t} ~\forall t\in[T]} B_A$, which will be later used in proofs. In federated learning, group reward is related to the information leakage probability of each base arm in a group. Therefore, it represents the privacy level of a client.

In line with prior work \cite{nika2021gp}, we assume that the learner knows $u$ and $v_G$ for all $G$ perfectly, but does not know $\bs f$ beforehand. We propose an extended semi-bandit feedback model, where when super arm $S_t$ is selected in round $t$, the learner observes $U(r_1(\bm{x}_{S_t}))$, $V_G(r_2(\bm{x}_{G \cap S_t}))$ for $G \in {\cal G}_t$ and $\bs r(x_{t,m})$ for $m \in S_t$ at the end of the round.

\subsection{Regret}
Our learning objective is to choose super arms that yield maximum rewards while ensuring that the groups containing the base arms of the chosen super arms satisfy a certain level of quality, which is characterized by the group threshold. In \rev{accordance} with our goal, we define regret notions for both super arms and groups. In our regret analysis, as typical in contextual bandits \cite{nika2020contextual}, we assume that the sequence $\{\mathcal{X}_t, \revn{ {\cal S}_t, {\cal G}_t } \}_{t=1}^T$ is fixed, hence context arrivals are not affected by past actions. We assume that each group $G \in {\cal G}_t$ has a threshold $\gamma_{t,G}$, which represents the minimum reward that the group requires when one or more of its forming base arms are in the chosen super arm. For instance, in federated learning, $\gamma_{t,G}$ represents the minimum desired privacy level of each client, specifically the probability that the client's data does not leak. This is important as we want to maintain privacy for clients during training. We say that group $G \in {\cal G}_t$ satisfies its threshold when $v_G(f_2(\bm{x}_{G \cap S_t})) \geq \gamma_{t,G}$. We then define the group regret as

\begin{align*}
R_g(T) = \sum_{t=1}^T \sum_{  G \in {\cal G}_t}  [ \gamma_{t,G}  - v_G(f_2(\bm{x}_{G \cap S_t}))]_{+},
\end{align*}
where $[\cdot]_{+} := \max \{ \cdot, 0 \}$.

Let ${\cal S}'_t \subseteq {\cal S}_t$ represent the set of super arms whose forming base arms' membering groups satisfy their thresholds, mathematically given by ${\cal S}'_t :=\{ S \in {\cal S}_t : \forall (G \in {\cal G}_t), v_G(f_2(\bm{x}_{t, G \cap S})) \geq \gamma_{t, G} \}$. The super arm regret is defined as the standard $\alpha$-approximation regret in CMAB:
\begin{align*}
R_{s}(T) = \alpha \sum_{t=1}^{T}  \text{opt}(f_t) - \sum_{t=1}^{T} u(f_1(\bm{x}_{t,S_t})),
\end{align*}
where $\text{opt}(f_t) = \max_{S \in {\cal S}'_t} u(f_1(\bm{x}_{t,S}))$. \revn{For this definition to make sense, we impose the following assumption.
\begin{assumption}[At least one good super arm]
\label{ass:s_t_non_empty}
Given a round $t$, we have that ${\cal S}'_t \neq \emptyset$.
\end{assumption}
The assumption above is a reasonable assumption for a real-world scenario, as we are simply requiring that there exists at least one super arm whose forming base arms satisfy their respective groups.} Further note that any optimal super arm $S^*_t \in \argmax_{S \in {\cal S}'_{t}} u(f_1(\bm{x}_{t,S}))$ is restricted to satisfy all its corresponding groups and this will be further explained in Section \ref{opt_super_arm}. Otherwise, the policy which always selects the optimal super arms will incur linear group regret.

The total regret is defined as a weighted combination of group and super arm regrets. We have been inspired by the scalarization approach \cite{citeulike:163662} which has been used in multi-objective reinforcement learning \cite{6615007} and in multi-objective multi armed bandits \cite{6889390}. We use a similar regret definition to that in \cite{6889390} where they use scalarized regret for multi-objectives. This is also applicable to our case as we want to balance between prioritizing super arm reward and group reward. Given the trade-off parameter $\zeta \in [0,1]$, we define it as 
\begin{align}
R(T) = \zeta R_{g}(T) + (1-\zeta) R_{s}(T) ~. \label{eqn:totalregret}
\end{align}
Setting $\zeta = 1$ reduces the problem to \rev{satisfying problem in a combinatorial setup,} while setting $\zeta =0$ reduces the problem to regret minimization in standard CMAB \cite{chen2013combinatorial}. 

\subsection{Computation Oracle}\label{oracles}
In the traditional C3-MAB setting where group constraints are not considered, there exists one optimization problem, namely identifying a super arm to play in each round \cite{chen2016combinatorial, nika2020contextual}. However, in our new setup where group constraints are considered, an additional optimization problem needs to be solved to identify super arms whose comprising groups \rev{i.e., the groups that the base arms of the selected super arm belong to} satisfy their constraints. To do so, first we identify \textit{good} subgroups and then filter out the super arms whose intersections with groups is not a good subgroup.

\subsubsection{Identifying \revn{Good Subgroups}}

We define a good subgroup of $G$ to be any subset of $G$ that satisfies $G$'s constraint. For instance, given $G = \{1,2,3,4\}$, $\gamma_G = 6$, and $v_G$ being the sum operator, then the following subsets are all good subgroups as they all satisfy the constraint: $\{2,4\}, \{3,4\}, \{1,2,4\}, \{2,3,4\}, \{1, 3, 4\}, \{1,2,3,4\}$. Then, given that $\hat{f}_{t,2}$ is the estimate of the base arm outcome used for computing group reward, $f_2$, in round $t$, the good subgroup identification problem in round $t$ can be formally expressed as identifying the set $\mathcal{G}_{t,\text{good}}$ defined as $\mathcal{G}_{t,\text{good}} := \{G' \subseteq G \mid G \in \mathcal{G}_t \text{ and } v_G(\hat{f}_{t,2}(\bs{x}_{G'})) \geq \gamma_{t,G'} \}$. Thus, in round $t$, we have $\mathcal{G}_{t,\text{good}} = \text{Oracle}_{\text{grp}}(\hat{f}_{t,2})$.

\subsubsection{Identifying \revn{the Optimal Super Arm}}\label{opt_super_arm}
Once $\mathcal{G}_{t,\text{good}}$ is identified, the next task is to identify the super arm that yields the highest reward. Using the set of good subgroups returned by Oracle$_{\text{grp}}$, the optimization problem of finding the optimal super arm in round $t$, $S_t$, \rev{given the estimate of the base arm outcome used for computing super arm reward}, $\hat{f}_{t,1}$, can be written as $S_t = \argmax_S \{u(\hat{f}_{t,1}(\bs{x}_{t,S})) \mid S \in \mathcal{S}_t$ and $\forall G \in \mathcal{G}_t, ~ S\cap G \in \mathcal{G}_{t,\text{good}} \}$. In other words, $S_t$ is the super arm that yields the highest reward calculated using $\hat{f}_{t,1}$ and whose intersections with any group is a good subgroup (i.e., satisfies that group's constraints). \revn{Note that if $\mathcal{G}_{t,\text{good}}$ is computed to be $\emptyset$, then $S_t$ is simply the super arm that yields the highest reward.} Then, we make use of an $\alpha$-approximate oracle, called Oracle$_{\text{spr}}$, where $u(\hat{f}_{t,1}(\bm{x}_{t, \text{Oracle}_{\text{spr}}(\hat{f}_{t,1})})) \geq \alpha \times$ opt($\hat{f}_{t,1}$). Our algorithm, which will be described in detail in Section \ref{sec:algo}, will make use of both oracles. Lastly, it should be noted that both oracles are deterministic given their inputs.

\subsection{Structure of Base Arm Outcomes}\label{sec:base_arms}

To ensure that the learner performs well, some regularity conditions are necessary on $\bs f$. For our paper, we model $\bs f$ as a sample from a two-output GP, defined below.

\textbf{Definition 1.} \textit{A two-output Gaussian Process with index set $\mathcal X$ is a collection of 2-dimensional random variables $(\bm{f}(x))_{x \in \mathcal X}$ which satisfy the condition that $(\bm{f}(x_1),\ldots,\bm{f}(x_n))$ has a multivariate normal distribution for all $(x_1,\ldots,x_n)$ and $n \in \mathbb{N}$. The probability law of the GP is governed by its vector-valued mean function given by $x \mapsto \bm{\mu} (x) = \mathbb{E}[\bm{f}(x)] \in \mathbb{R}^2$ and its matrix-valued covariance function given by $(x_1,x_2) \mapsto \bm{k}(x_1,x_2) = \mathbb{E}[(\bm{f}(x_1)-\bm{\mu} (x_1))(\bm{f}(x_2)-\bm{\mu} (x_2))^T] \in \mathbb{R}^{2\times2}$.} 

We assume that we have bounded variance, that is, $k_{jj}(x,x) \leq 1$ for every $x\in\mathcal X$ and $j \in \{1, 2\}$. This is a standard assumption generally used in GP bandits \cite{srinivas2012information}.

\subsection{Posterior Distribution of Base Arm Outcomes}
Our learning algorithm will make use of the posterior distribution of the two-output GP-sampled function $\bs f$. Given a fixed $N \in \mathbb{N}$ we consider a finite sequence $\bm{\tilde x }_{[N]} = [\tilde{x}_1,\ldots,\tilde{x}_N]^T$ of contexts with corresponding outcome vector (2$N$-dimensional) $\bm{r}_{[N]} := [\bm{r}(\tilde{x}_1)^T,\ldots,\bm{r}(\tilde{x}_N)^T]^T$ and the corresponding expected outcome vector (2$N$-dimensional) $\bm{f}_{[N]} = [\bm{f}(\tilde{x}_1)^T,\ldots,\bm{f}(\tilde{x}_N)^T]^T$. For every $n \leq N$, we have $\bm{r}(\tilde{x}_n) = \bm{f}(\tilde{x}_n)+\bm{\eta}_n$ where $\bm{\eta}_n$ is the noise corresponding to that outcome. The posterior distribution of $\bm{f}$ given $\bm{r}_{[N]}$ is that of a two-output GP characterized by its mean $\bm{\mu}_N$ and its covariance $\bm{k}_N$ which are given as follows:

\resizebox{\linewidth}{!}{%
\begin{minipage}{\linewidth}
    \begin{align*}
    \bm{\mu}_N(\tilde{x}) &= (\bm{k}_{[N]}  (\tilde{x})) (\bm{K}_{[N]}+\sigma^2\bm{I}_{2N})^{-1}\bm{r}_{[N]}^T, \\[0ex]
    \bm{k}_N(\tilde{x},\tilde{x}') &= \bm{k}(\tilde{x},\tilde{x}')-(\bm{k}_{[N]}(\tilde{x}))  (\bm{K}_{[N]} +\sigma^2\bm{I}_{2N})^{-1}\bm{k}_{[N]}(\tilde{x}')^T.
    \end{align*}
\end{minipage}
}

Here $\bm{k}_{[N]}(\tilde{x})=[\bm{k}(\tilde{x},\tilde{x}_1),\ldots,\bm{k}(\tilde{x},\tilde{x}_N)] \in \mathbb{R}^{2\times2N}$ and \[
  \bm{K}_{[N]} =
  \left[ {\begin{array}{cccc}
    \bm{k}(\tilde{x}_1,\tilde{x}_1), & \cdots, & \bm{k}(\tilde{x}_1,\tilde{x}_N)\\
    \vdots & \ddots & \vdots\\
    \bm{k}(\tilde{x}_N,\tilde{x}_1) & \cdots, & \bm{k}(\tilde{x}_N,\tilde{x}_N)\\
  \end{array} } \right]
\] Overall, the posterior of $\bm{f}(x)$ is given by $\mathcal{N}(\bm{\mu}_N(x), \bm{k}_N(x,x))$ and for each $j \in \{1, 2\}$, the posterior distribution of $f_j(x)$ is $\mathcal{N}({\mu_j}_N(x), ({\sigma_j}_N(x))^2)$, where $({\sigma_j}_N(x))^2 = {k_{jj}}_N(x, x)$. Moreover, the posterior distribution of the outcome $\bm{r}(x)$ is given by $\mathcal{N}(\bm{\mu}_N(x), \bm{k}_N(x,x) + \sigma^2 \bm{I}_2).$

\subsection{The Information Gain}
Regret bounds for GP bandits depend on how well $\bm{f}$ can be learned from sequential interaction. The learning difficulty is quantized by the information gain, which accounts for the reduction in entropy of a random vector after a sequence of (correlated) observations. For a length $N$ sequence of contexts $\bm{\tilde{x}}_{[N]}$, evaluations of $\bm{f}$ at contexts in $\bm{\tilde{x}}_{[N]}$, given by $\bm{f}_{[N]}$, and the corresponding sequence of outcomes $\bm{r}_{[N]}$, the information gain is defined as $I(\bm{r}_{[N]};\bm{f}_{[N]}) := H(\bm{r}_{[N]})-H(\bm{r}_{[N]}|\bm{f}_{[N]})$ where $H(\cdot)$ and $H(\cdot|\cdot)$ represent the entropy and the conditional entropy operators, respectively. Essentially, the information gain quantifies the reduction in the entropy of $\bm{f}_{[N]}$ given $\bm{r}_{[N]}$. The maximum information gain over the context set ${\cal X}$ is denoted by ${\gamma_j}_N := \sup_{\tilde{\bm{x}}_{[N]}: \tilde{x}_i \in {\cal X}, i \in [N]} I_j(\bm{r}_{[N]};\bm{f}_{[N]}).$ \revn{Similarly as before, $j \in \{1, 2\}.$}

It is important to understand that ${\gamma_j}_N$ is the maximum information gain for an $N$-tuple of contexts from $\mathcal{X}$. For large context spaces, ${\gamma_j}_N$ can be very large. As we have varying base arm availability from round to round due to changing action sets, we only need to know the maximum information gain that is associated with a fixed sequence of base arm contexts arrivals. As we have a finite maximum information gain, this motivates us to relate the regret bounds to the informativeness of the available base arms. To adapt the definition of the maximum information gain to our setup with changing action sets and to ensure tight bounds, we define a new term $\overline{\gamma_j}_{T}$. We let ${\cal Z}_t \subset 2^{\mathcal{X}_t}$ for $t \geq 1$ be the set of sets of context vectors which correspond to the base arms in the feasible set of super arms ${\cal S}_t$. We let $\tilde{\bm{z}}_t := \bm{x}_{t, S}$ be an element of $\mathcal{Z}_t$ for a super arm $S$ and we let $\tilde{\bm{z}}_{[T]} := [\tilde{\bm{z}}^T_1 ,\ldots, \tilde{\bm{z}}^T_T]$. Then the maximum information gain which is associated with the context arrivals $\mathcal{X}_1,\ldots,\mathcal{X}_T$ is
\begin{align*}
\overline{\gamma_j}_{T} = \max\limits_{\tilde{\bm{z}}_{[T]}:\tilde{\bm{z}}_t \in \mathcal{Z}_t , t \leq T} I(r_j(\tilde{\bm{z}}_{[T]});f_j(\tilde{\bm{z}}_{[T]})).
\end{align*}
The information gain for any $T$ element sequence of feasible super arms has a maximum given by $\overline{\gamma_j}_{T}$. As it will be shown in Section 4, our regret bounds depend on $\overline{\gamma_j}_{T}$. 

\subsection{Motivation} \label{motivation_sec}

\begin{figure}[t]
\centering
\includegraphics[width=\linewidth]{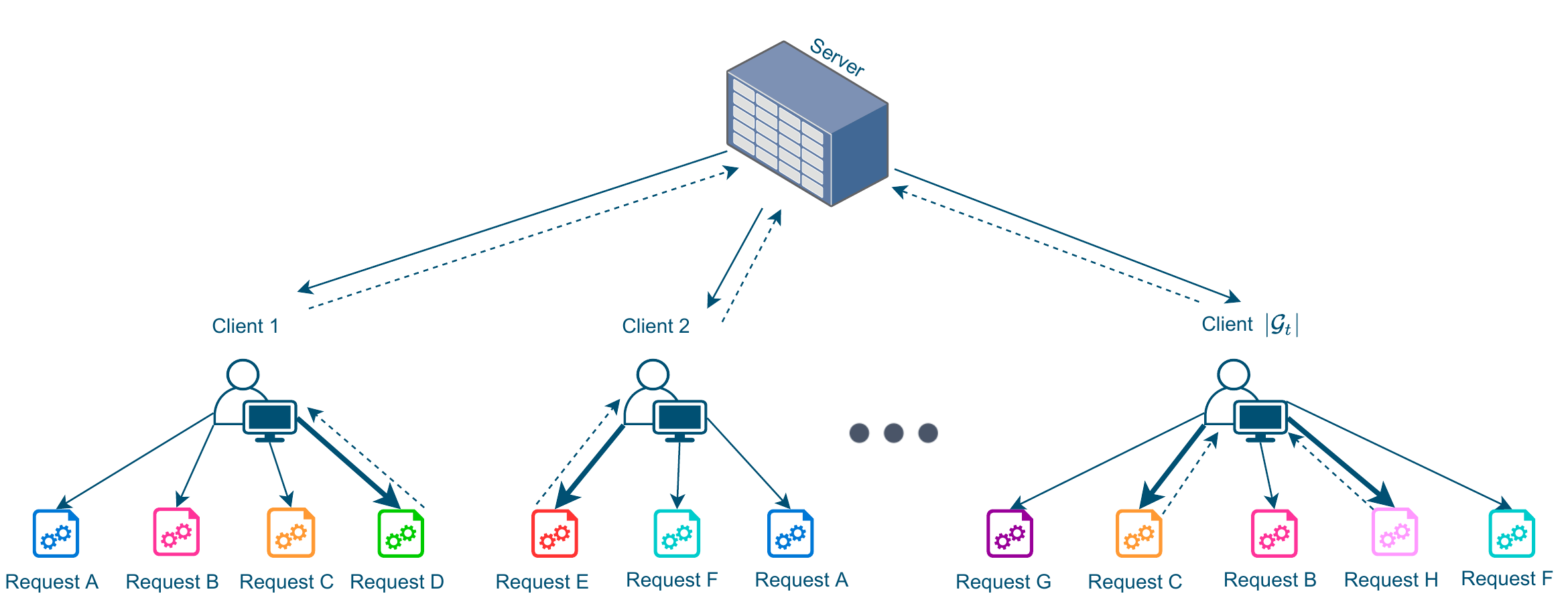}
\caption{Illustration of a privacy-aware federated learning setup. A central server can offer different clients, who have varying privacy requirements, tasks that require the client to train a model using a given portion of their local data set.}
\label{fig:motivation}
\end{figure}

\revn{Our first motivation is a crowdsourcing  setup, depicted in Figure \ref{fig:motivation}.} At each round $t$, $|\mathcal{G}_t|$ groups, hence clients, are available. Each client has one or more requests and these client-request pairs correspond to base arms, $m \in \mathcal{M}_t$. Super arms are comprised of combinations of these pairs. The chosen super arm $S_t$ includes the client-request pairs having bold arrows. These pairs also have dashed arrows as they train their data locally before sending their model back to the server. Base arm outcomes, namely $\bm f$, are the amount of information that the server retrieves and also the information leakage probability associated with each pair. The amount of retrieved information which is related to super arm reward ($u$) is aimed to be maximized, and the information leakage probability which is inversely related to the group reward ($v_G$) is aimed to satisfy a certain privacy threshold ($\gamma_{t, G}$). The context coming at each round $t$ for a base arm $m$, denoted by $x_{t, m}$, is the data set usage percentage for each client-request pair and hence affects base arm outcomes. The data set usage percentage affects information that the server retrieves since a higher percentage means the client is using more of its data to train the model and this percentage also affects the information leakage probability since training on larger portions of \revn{dataset} leads to more information leaked as explored in \cite{privacy_leakage}. The server cares about the clients' privacy requirements as privacy is an important factor which influences the likelihood of a client enrolling in future crowdsourcing jobs. Essentially, a client is less likely to participate in future crowdsourcing jobs from a server if the server repeatedly requests the client to perform tasks which cause data leakage beyond the privacy leakage preference of the client. Also in this setup, it is not possible for clients to reject tasks which are assigned to them, \revn{however, if the server assigns tasks which disregard the privacy preference of the clients, the clients will likely become dissatisfied with the server and unlikely to use it for future jobs.\footnote{Note that our setup is general and we make no assumption on past base arm contexts affecting future ones.}} The clients leaving corresponds to changing arm sets in this model since when a client leaves its corresponding requests are also no longer available.

\rev{Another motivation for our setup is movie recommendation with content caching. In this formulation, base arms can be movie-user pairs and context can represent the alignment between the users' interests and the movies' contents, which is captured by the rating. Groups can be taken to be the users that are in the same location and super arms can again be the combination of various base arms. When a super arm is selected, movies of the base arms that are contained in this super arm are cached in their respective locations, hence, maximizing super arm reward is related to caching content that will be watched and liked by people. In this problem, we want to identify which movies to recommend and cache and in which locations we should cache these movies. Hence, we want to maximize the super arm reward which is a function of the ratings of the movies that we recommend to the users, in order to cache these movies since they are rated highly. But we also want to ensure that the movies that we cached at a certain location are well-liked on average by the users at that location since this is the main goal of caching. This corresponds to maximizing group reward. We give a more detailed explanation with mathematical notation for these two setups in Section \ref{sec:exp}.}

\section{Algorithm}\label{sec:algo}

We design an optimistic algorithm that uses GP UCBs in order to identify groups that satisfy thresholds and super arms that maximize rewards. We name our algorithm Thresholded Combinatorial GP-UCB (TCGP-UCB) with pseudo-code given in Algorithm 1. TCGP-UCB uses the {\em double UCB} principle, using different exploration bonuses for group and super arm selection. These bonuses are adjusted based on the trade-off parameter $\zeta$ given in the regret definition in \eqref{eqn:totalregret}.

For every available base arm, two indices which are upper confidence bounds on the two expected outcomes are defined by considering the parameter $\zeta \in [0, 1]$. Consider base arm $m$ with its associated context $x_{t,m}$. We define its reward index as 
\begin{align*}
i_{t}(x_{t,m}) = {\mu_1}_{\llbracket t-1 \rrbracket}(x_{t,m}) + \frac1{1-\zeta} (\sqrt{\beta_{t}}) {\sigma_1}_{\llbracket t-1 \rrbracket}(x_{t,m}),
\end{align*}
where $ {\mu_1}_{\llbracket t-1 \rrbracket}(x_{t,m})$ and ${\sigma_1}_{\llbracket t-1 \rrbracket}(x_{t,m})^2$ stand for the posterior mean and variance. Similarly, we define the satisfying index of base arm $m$ as 
\begin{align*}
i'_{t}(x_{t,m}) = {\mu_2}_{\llbracket t-1 \rrbracket}(x_{t,m}) + \frac1{\zeta} (\sqrt{\beta_{t}}) {\sigma_2}_{\llbracket t-1 \rrbracket}(x_{t,m}).
\end{align*}

We give the indices $i'_t(x_{t, m})$ to Oracle$_{\text{grp}}$, which returns the \rev{set of good subgroups, $\mathcal{G}_{t,\text{good}}$}. We form the set of feasible super arms whose groups are expected to satisfy their constraints, $\mathcal{\hat{S}}'_t$, by using the returned good subgroups and $\mathcal{S}_t$. \revn{If the oracle computes $\mathcal{G}_{t,\text{good}}$ to be the empty set, then we set $\mathcal{\hat{S}}'_t = \mathcal{S}_t$. Otherwise, we have that  $\mathcal{\hat{S}}'_t  = \{S \in \mathcal{S}_t : \forall G \in \mathcal{G}_t, ~ S\cap G \in \mathcal{G}_{t,\text{good}}\}$}. Then, we give the indices $i_t(x_{t, m})$ of the base arms that construct $\mathcal{\hat{S}}'_t$ to Oracle$_{\text{spr}}$, which returns an $\alpha$-optimal super arm.

\begin{algorithm}
\caption{TCGP-UCB}
\begin{algorithmic}[1]
	\STATE\textbf{Input: }{$\mathcal X, K, M, {\zeta}, \revn{\delta, T};$
		GP Prior: $\mathcal{GP}(\revn{\bm{\mu}_0, \bm{k}})$}
	\FOR{$t = 1, \ldots, T$}
	\STATE Observe base arms in $\mathcal M_t$, their contexts $\mathcal X_t$, their groups $\mathcal{G}_t$, and feasible super arms $\mathcal{S}_t$
	\FOR{$x_{t,m}$ : $m \in \mathcal M_t$}
	\STATE Calculate $\bs \mu_{\llbracket t-1 \rrbracket}(x_{t,m})$ and $\bs \sigma_{\llbracket t-1 \rrbracket}(x_{t,m})$ \revn{using GP posterior}
 	\STATE Compute indices $i_{t}(x_{t,m})$ and $i'_{t}(x_{t,m})$
	\ENDFOR 
	\STATE $\mathcal{G}_{t,\text{good}} \leftarrow$ Oracle$_{\text{grp}}(i_t'(x_{t,m})_{m \in \mathcal{M}_t},\mathcal{G}_t)$
	\STATE Form the set $\mathcal{\hat{S}}'_t$ using $\mathcal{G}_{t,\text{good}}$ and $\mathcal{S}_t$
	\STATE $S_t \leftarrow$ Oracle$_{\text{spr}}(i_t(x_{t,m})_{m \in \hat{\mathcal{S}}'_t}, \mathcal{\hat{S}}'_t)$
	\STATE Observe base arm outcomes \rev{$\bm{r}$}, collect group rewards \rev{$V_G$} and super arm reward \rev{$U$}
	\ENDFOR  \label{Algortihm 1}
\end{algorithmic}
\end{algorithm}

\section{Theoretical Analysis}
Next is our main result which asserts a high probability upper bound on the total regret of our algorithm in terms of $\overline{\gamma}_T$. The supplemental document contains further details and proofs of our results, most of which follow directly from those of \cite{nika2021gp}. Throughout this section, we take $j \in \{1, 2\}$.

\begin{restatable} {thm}{regret} \label{thm:regret}

Super arm regret and group regret incurred by TCGP-UCB in $T$ rounds are upper bounded with probability at least $1-\delta$ where $\delta \in (0,1)$, $T \in \mathbb{N}$ and $\beta_{t} = 2\log{(M\pi^2t^2/3\delta)}$ as follows:
\begin{align*}
	R_{g}(T) \leq \sqrt{C_1(K) \beta_{T} KT \overline{\gamma_2}_T},
\end{align*}
where $C_1(K) = 2 {B}^2 (\frac{\zeta + 1}{\zeta})^2(\lambda^*(K)+\sigma^2)$ and
\begin{align*}
	R_{s}(T) \leq \sqrt{C_2(K) \beta_{T} KT \overline{\gamma_1}_T},
\end{align*}
where $C_2(K) = 2 {B'}^2 (\frac{2 - \zeta}{1 - \zeta})^2(\lambda^*(K)+\sigma^2)$. Hence, the total regret is bounded by:
\begin{align*}
	R(T) \leq \sqrt{C(K) \beta_T KT \overline{\gamma}_T},
\end{align*}
where $C(K) = 8(B + B')^2(\lambda^*(K)+\sigma^2)$, $\overline{\gamma}_T = \max \{\overline{\gamma_1}_{T}, \overline{\gamma_2}_T\} $, \revn{and $\lambda^*(K)$ is the maximum eigenvalue of $\Sigma_{\llbracket t-1 \rrbracket}\left(\boldsymbol{z}_t\right)$, for $t \leq T$ (i.e., maximum eigenvalue of all covariance matrices of selected actions).}
\end{restatable}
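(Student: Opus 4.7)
The plan is to mirror the standard GP-UCB analysis (in the spirit of \cite{srinivas2012information,nika2021gp}) but carried out on two channels simultaneously: the index $i_t$ controls the super-arm regret through $f_1$, and the index $i'_t$ controls the group regret through $f_2$. The first step is to establish a uniform confidence event $\mathcal{E}$: using a standard Gaussian tail bound together with a union bound over $t \in [T]$ and $m \in \mathcal M_t$, with $\beta_t = 2\log(M\pi^2 t^2/3\delta)$ we get that, with probability at least $1-\delta$, $|\mu_{j,\llbracket t-1\rrbracket}(x_{t,m}) - f_j(x_{t,m})| \leq \sqrt{\beta_t}\,\sigma_{j,\llbracket t-1\rrbracket}(x_{t,m})$ for all $t$, $m$ and $j \in \{1,2\}$. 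The rest of the argument is conditioned on $\mathcal{E}$.

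Next, I analyze each regret term separately. For the \emph{group regret}, a key consequence of $\mathcal{E}$ and monotonicity (Assumption \ref{ass:g_reward_mono}) is that $v_G(i'_t(\bm{x}_{G\cap S})) \geq v_G(f_2(\bm{x}_{G\cap S}))$ for every $G \in \mathcal{G}_t$ and every $S$; combined with Assumption \ref{ass:s_t_non_empty} this ensures $\mathcal{G}_{t,\mathrm{good}} \neq \emptyset$ and $\mathcal{S}'_t \subseteq \hat{\mathcal{S}}'_t$. Since the selected $S_t \in \hat{\mathcal{S}}'_t$ satisfies $v_G(i'_t(\bm{x}_{G\cap S_t})) \geq \gamma_{t,G}$ for every $G$, the instantaneous violation obeys
\begin{align*}
[\gamma_{t,G}-v_G(f_2(\bm{x}_{G\cap S_t}))]_{+} \leq v_G(i'_t(\bm{x}_{G\cap S_t})) - v_G(f_2(\bm{x}_{G\cap S_t})) \leq B\,\tfrac{\zeta+1}{\zeta}\sqrt{\beta_t}\!\!\sum_{m\in G\cap S_t}\!\!\sigma_{2,\llbracket t-1\rrbracket}(x_{t,m}),
\end{align*}
using Lipschitzness (Assumption \ref{ass:g_reward_lip}) and $i'_t - f_2 \leq \tfrac{\zeta+1}{\zeta}\sqrt{\beta_t}\sigma_2$. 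Summing over $G$, the disjointness of groups (Assumption \ref{ass:group_disjoint}) collapses the double sum to $\sum_{m\in S_t}\sigma_{2,\llbracket t-1\rrbracket}(x_{t,m})$. For the \emph{super-arm regret}, since $f_1 \leq i_t$ on $\mathcal E$, the oracle guarantee at the optimal $S^*_t \in \mathcal{S}'_t \subseteq \hat{\mathcal{S}}'_t$ gives $\alpha\,u(f_1(\bm{x}_{S^*_t})) \leq \alpha\,\mathrm{opt}(i_t) \leq u(i_t(\bm{x}_{S_t}))$, so instantaneous super-arm regret is at most $u(i_t(\bm{x}_{S_t})) - u(f_1(\bm{x}_{S_t})) \leq B'\,\tfrac{2-\zeta}{1-\zeta}\sqrt{\beta_t}\sum_{m\in S_t}\sigma_{1,\llbracket t-1\rrbracket}(x_{t,m})$ by Assumptions \ref{ass:s_reward_mono} and \ref{ass:s_reward_lip}.

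The third step is Cauchy--Schwarz across rounds followed by the information-gain bound. Applying $\bigl(\sum_{t,m}\sigma_{j}\bigr)^2 \leq KT\sum_{t}\sum_{m\in S_t}\sigma_j^2$, I then invoke the combinatorial analogue of Lemma 5.4 of \cite{srinivas2012information} (as used in \cite{nika2021gp}): $\sum_{t=1}^T\sum_{m\in S_t}\sigma_{j,\llbracket t-1\rrbracket}^2(x_{t,m}) \leq \tfrac{2(\lambda^*(K)+\sigma^2)}{\log(1+\sigma^{-2})}\,\overline{\gamma_j}_T$ (or simply $2(\lambda^*(K)+\sigma^2)\overline{\gamma_j}_T$ absorbing the log denominator into the $\tilde O$). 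This is the step where $\lambda^*(K)$ enters: it comes from converting the batch trace $\Tr(\Sigma_{\llbracket t-1\rrbracket}(\bm{z}_t))$ into the batch log-determinant using $x \leq \tfrac{\log(1+x)}{\log(1+\lambda^*(K)\sigma^{-2})}\lambda^*(K)\sigma^{-2}$ on each eigenvalue. Plugging in yields the stated $\sqrt{C_1(K)\beta_T KT\overline{\gamma_2}_T}$ and $\sqrt{C_2(K)\beta_T KT\overline{\gamma_1}_T}$ bounds, with the constants matching the formulas because $B^2\bigl(\tfrac{\zeta+1}{\zeta}\bigr)^2$ and $(B')^2\bigl(\tfrac{2-\zeta}{1-\zeta}\bigr)^2$ each pick up a factor of $2(\lambda^*(K)+\sigma^2)$.

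Finally, combining via $R(T) = \zeta R_g(T) + (1-\zeta)R_s(T)$ and $\overline{\gamma}_T = \max\{\overline{\gamma_1}_T,\overline{\gamma_2}_T\}$, the prefactors telescope nicely: $\zeta\sqrt{C_1(K)} = \sqrt{2(\lambda^*(K)+\sigma^2)}\,B(\zeta+1) \leq 2B\sqrt{2(\lambda^*(K)+\sigma^2)}$ and $(1-\zeta)\sqrt{C_2(K)} = \sqrt{2(\lambda^*(K)+\sigma^2)}\,B'(2-\zeta) \leq 2B'\sqrt{2(\lambda^*(K)+\sigma^2)}$; squaring their sum produces exactly $C(K) = 8(B+B')^2(\lambda^*(K)+\sigma^2)$. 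The main obstacle I anticipate is not the algebra but the careful bookkeeping around the two oracle calls: showing that on $\mathcal E$ the optimistic subgroup oracle never ``loses'' a truly feasible super arm (so that $S^*_t$ is a valid competitor for $\mathrm{Oracle}_{\mathrm{spr}}$), and simultaneously that every selected $S_t$ does have group-UCB feasibility so the group regret can be controlled by the confidence widths. The rest follows the now-standard GP-UCB recipe, with the combinatorial/information-gain step being the only place where problem-specific constants ($K$, $\lambda^*(K)$, $\overline{\gamma}_T$) must be tracked precisely.
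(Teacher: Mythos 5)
Your proposal is correct and follows essentially the same route as the paper: the confidence event with $\beta_t = 2\log(M\pi^2t^2/3\delta)$, the inclusion $\mathcal{S}'_t \subseteq \hat{\mathcal{S}}'_t$ under the UCB event, the per-round bounds $\bigl(\tfrac{\zeta+1}{\zeta}\bigr)B\sqrt{\beta_t}\sum\sigma_2$ and $\bigl(\tfrac{2-\zeta}{1-\zeta}\bigr)B'\sqrt{\beta_t}\sum\sigma_1$ via Lipschitzness, the double Cauchy--Schwarz, the information-gain lemma introducing $\lambda^*(K)$, and the final $\zeta$-elimination yielding $C(K)=8(B+B')^2(\lambda^*(K)+\sigma^2)$. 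The only cosmetic deviation is your optional $\log(1+\sigma^{-2})$ denominator in the variance-to-information-gain step; the paper's Lemma 6 gives exactly $\sum\sigma_j^2 \leq 2(\lambda^*(K)+\sigma^2)\overline{\gamma_j}_T$, which is the form you need (and use) to recover the stated constants.
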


\rev{
\begin{rem}
    Note the dependence of the bounds on the maximum eigenvalue of all covariance matrices throughout all rounds, denoted by $\lambda^*(K)$, which is a function of $K$. The worst cases upper bounds on $\lambda^*$ are known to be $O(K)$ \citep{zhan2005extremal}, in which case we obtain linear dependence on $K$ in the regret bounds. This dependence comes from the fact that base arm outcomes are not independent of each other. On the other hand, when the base arm outcomes are all independent, the covariance matrices in each round would be diagonal with $\lambda^*=\max_{t\leq T, k\leq K}\sigma^2_{j_{\llbracket t-1 \rrbracket}}(x_{t,k})\leq 1$, and thus, we incur regret bounds of the form $O(\sqrt{KT\overline{\gamma}_{T}})$.
\end{rem}}

\rev{
Next, we state lower and upper bounds on $\overline{\gamma}_{T}$ in terms of the classical information gain. This gives us a measure of how tight our bounds are with respect to previous bounds in the GP bandit literature.

\begin{restatable} {thm} {gammaBarBound}\label{thm: gamma_bar_bound}
    Letting $\overline{T}=\sum^T_{t=1}\card{S_t}$ be the sum of cardinalities of selected actions up to time $T$, we have that
    \begin{align*}
        \overline{\gamma}_T \leq \gamma_{\overline{T}}~.
    \end{align*}
    Furthermore, if $\card{S_t}=K$ and $\mathcal{X}_t=\mathcal{X}$, for all $t\in [T]$, then we have 
    \begin{align*}
        \frac{1}{K}\gamma_{KT} \leq \overline{\gamma}_{T} \leq \gamma_{KT}~.
    \end{align*}
\end{restatable}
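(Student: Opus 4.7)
My plan is to exploit the fact that the mutual information between noisy GP observations and the underlying function values depends only on the multiset of query contexts, not on how those contexts are batched into super arms. For the first bound $\overline{\gamma}_T \leq \gamma_{\overline{T}}$, I would fix an arbitrary feasible sequence $\tilde{\bm{z}}_{[T]}$ with $\tilde{\bm{z}}_t \in \mathcal{Z}_t$ and flatten it into a single length-$\overline{T}$ sequence of individual contexts in $\mathcal{X}$, where $\overline{T} = \sum_{t=1}^T \card{S_t}$. Since the collected observations are identical in both views, the super-arm information gain $I(r_j(\tilde{\bm{z}}_{[T]}); f_j(\tilde{\bm{z}}_{[T]}))$ equals the information gain of the flattened sequence, which by the definition of the classical maximum information gain is at most $\gamma_{\overline{T}}$. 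Taking the maximum over feasible $\tilde{\bm{z}}_{[T]}$ then gives the bound.

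For the second part, note that $\overline{T}=KT$ under the hypothesis $\card{S_t}=K$, so the upper bound $\overline{\gamma}_T \leq \gamma_{KT}$ is immediate from the first part. For the lower bound I would combine two standard facts. (i) Subadditivity of the maximum information gain: by submodularity of mutual information, $I(\bm{r}_{A \cup B}; \bm{f}_{A \cup B}) \leq I(\bm{r}_A; \bm{f}_A) + I(\bm{r}_B; \bm{f}_B)$ for disjoint context sets $A, B$; taking suprema yields $\gamma_{N_1+N_2} \leq \gamma_{N_1} + \gamma_{N_2}$, and iterating gives $\gamma_{KT} \leq K\gamma_T$. (ii) $\overline{\gamma}_T \geq \gamma_T$: take the optimal length-$T$ context sequence $\tilde{\bm{x}}^*_{[T]}$ achieving $\gamma_T$ and, since $\mathcal{X}_t = \mathcal{X}$ and every feasible super arm has size $K$, embed each $\tilde{x}^*_t$ into a feasible size-$K$ super arm $\tilde{\bm{z}}_t \supseteq \{\tilde{x}^*_t\}$. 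Observing more Gaussian outcomes per round can only grow the joint mutual information (this is the non-negativity of conditional mutual information: $I(Y,Y';X,X') - I(Y;X) = I(Y;X'\mid X) + I(Y';X,X'\mid Y) \geq 0$), so the constructed super-arm sequence attains information gain at least $\gamma_T$. Chaining (i) and (ii) yields $\overline{\gamma}_T \geq \gamma_T \geq \tfrac{1}{K}\gamma_{KT}$.

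The main obstacle I anticipate is the embedding step in (ii). Strictly, I need every context in $\mathcal{X}$ to be extendable to a feasible super arm of size $K$ in every round, which is not formally guaranteed by $\mathcal{X}_t=\mathcal{X}$ and $\card{S_t}=K$ alone unless $\mathcal{Z}_t$ is taken to contain a rich enough family of $K$-subsets of $\mathcal{X}$. I would either elevate this implicit richness condition to an explicit assumption or, alternatively, establish the weaker lower bound directly on the feasible family $\mathcal{Z}_t$ using a covering argument. The remaining ingredients—submodularity of mutual information for GP observations and monotonicity of mutual information in the number of noisy Gaussian observations—are classical and introduce no real difficulty.
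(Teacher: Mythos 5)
Your proof is correct, and it is essentially the intended one: the paper itself contains no argument for this theorem, deferring entirely to Theorem 2 of \cite{nika2021gp}, where the proof is exactly your flattening step for the upper bound and the chain $\overline{\gamma}_{T} \geq \gamma_T \geq \tfrac{1}{K}\gamma_{KT}$ (monotonicity of information gain under extra Gaussian observations, plus subadditivity $\gamma_{KT}\leq K\gamma_T$) for the lower bound. The caveat you flag about step (ii) is a real one: $\overline{\gamma}_T \geq \gamma_T$ requires that each context achieving $\gamma_T$ can be placed inside some feasible super arm in the corresponding round, which the hypotheses $\mathcal{X}_t=\mathcal{X}$ and $\card{S_t}=K$ do not formally guarantee unless $\mathcal{Z}_t$ is rich enough; this implicit assumption is inherited from the cited source and is a gap in the theorem's statement rather than in your argument.
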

}

Using this characterization of $\overline{\gamma}_T$, we can now provide a regret bound that depends on the classical notion of maximum information gain. 

\begin{restatable} {thm} {fixed}\label{thm:fixed}
    Fix $\delta \in (0,1)$ and let $T,K \in \mathbb{N}$. Under the conditions of Theorem \ref{thm:regret} and assuming that $|S| = K$ for any $S \in {\cal S}$, let $\overline{T}=\sum^T_{t=1}\card{S_t}$. Then, the total regret incurred by TCGP-UCB in $T$ rounds is upper bounded with the following with probability at least $1-\delta$:
    \begin{align}
    	R(T) \leq \sqrt{C(K) \beta_T KT \gamma_{\overline{T}}} \nonumber,
    \end{align}
    where 
    \begin{align}
    \gamma_{\overline{T}} = \max_j \max_{A\subset \mathcal{X}:\card{A}=\overline{T}} I_j(r_j(\bs{z}_A); f_j(\bs{z}_A))
    \end{align}
    while $C$ and $\beta_T$ are the same as in Theorem \ref{thm:regret}.
\end{restatable}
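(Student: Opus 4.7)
The plan is to derive Theorem \ref{thm:fixed} as an essentially one-step corollary of the two preceding results, Theorem \ref{thm:regret} and Theorem \ref{thm: gamma_bar_bound}, without introducing any additional probabilistic argument.

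First, I would invoke Theorem \ref{thm:regret} under the same hypotheses: with $\beta_t = 2\log(M\pi^2 t^2 / 3\delta)$, with probability at least $1-\delta$, the total regret of TCGP-UCB satisfies
\begin{align*}
R(T) \leq \sqrt{C(K)\,\beta_T\,K\,T\,\overline{\gamma}_T}.
\end{align*}
At this step the assumption $\card{S}=K$ for every $S\in\mathcal{S}$ plays no role; I simply reuse the favorable event already constructed in the proof of Theorem \ref{thm:regret}, so no additional union bound or failure probability is incurred.

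Next, I would apply the first inequality of Theorem \ref{thm: gamma_bar_bound}, namely $\overline{\gamma}_T \leq \gamma_{\overline{T}}$ with $\overline{T} = \sum_{t=1}^T \card{S_t}$. This inequality is unconditional: it does not require the super-arm cardinalities to agree across rounds. Under the present assumption $\card{S}=K$ for all $S\in\mathcal{S}$, we in particular get $\overline{T}=KT$, so the quantity $\gamma_{\overline{T}}$ coincides with the classical $\gamma_{KT}$. By monotonicity and nonnegativity of the square root, substituting this inequality into the bound above immediately yields
\begin{align*}
R(T) \leq \sqrt{C(K)\,\beta_T\,K\,T\,\gamma_{\overline{T}}},
\end{align*}
with the same constants $C(K)$ and $\beta_T$ as in Theorem \ref{thm:regret}. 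This is exactly the desired conclusion, and the definition of $\gamma_{\overline{T}}$ as $\max_j \max_{A\subset\mathcal{X}:\card{A}=\overline{T}} I_j(r_j(\bs{z}_A);f_j(\bs{z}_A))$ follows verbatim from applying the classical information-gain maximum to each output coordinate of the two-output GP and taking the larger value, matching the $\overline{\gamma}_T = \max\{\overline{\gamma_1}_T,\overline{\gamma_2}_T\}$ convention used throughout the paper.

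The main observation is that there is no real obstacle inside this proof: all the substantive work has already been absorbed into the two cited results. Theorem \ref{thm:regret} carries the full burden of the double-UCB confidence analysis and the reduction of both $R_s(T)$ and $R_g(T)$ to sums of posterior standard deviations bounded via the maximum eigenvalue $\lambda^*(K)$, while Theorem \ref{thm: gamma_bar_bound} supplies the genuinely new structural ingredient by comparing the adaptive information gain $\overline{\gamma}_T$ against the classical maximum information gain over $\overline{T}$-sized context sets. The purpose of Theorem \ref{thm:fixed} is simply to repackage the bound, under the clean hypothesis of a fixed super-arm size $K$, into a form involving only the classical $\gamma_{KT}$, so that off-the-shelf kernel-dependent upper bounds on $\gamma_{KT}$ from the GP-bandit literature (e.g., Squared Exponential, Mat\'ern) can be plugged in without further modification.
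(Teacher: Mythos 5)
Your proposal is correct and follows essentially the same route as the paper: the paper likewise obtains Theorem \ref{thm:fixed} by combining the bound of Theorem \ref{thm:regret} with the inequality $\overline{\gamma}_T \leq \gamma_{\overline{T}}$ from Theorem \ref{thm: gamma_bar_bound} (together with the monotonicity of predictive variance, which is already absorbed into that comparison), with no new probabilistic argument. Your additional observation that the fixed-cardinality assumption only serves to make $\overline{T}=KT$ deterministic matches the paper's intent.
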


\rev{Finally, using kernel-dependent explicit bounds on $\gamma_{\overline{T}}$ given in \citep{srinivas2012information, vakili2020information}, we state a corollary of \cref{thm:fixed} that gives similar bounds on the total regret incurred by TCGP-UCB.

\begin{restatable} {cor} {kernelBounds} \label{cor:kernel_bounds}
Let $\delta \in (0,1)$, $T,K\in \mathbb{N}$ and let $\XX \subset \mathbb{R}^D$ be compact and convex. Under the conditions of \cref{thm:fixed} and for the following kernels, the total regret incurred by TCGP-UCB in $T$ rounds is upper bounded (up to polylog factors) with probability at least $1-\delta$ as follows:
	\begin{enumerate}
		\item For the linear kernel we have: $R(T) \leq \tilde{O} \left( \sqrt{\lambda^*(K)DKT}\right) $.
		\item For the RBF kernel we have: $R(T) \leq \tilde{O} \left(  \lambda^*(K)\sqrt{DKT}\right)$.
		\item For the Matérn kernel we have: $R(T) \leq \tilde{O}  \left( \lambda^*(K) T^{ (D+\nu )/(D+2\nu )}\right)$,
	\end{enumerate}
	where $\nu >1$ is the Matérn parameter.
\end{restatable}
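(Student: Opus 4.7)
\textbf{Proof plan for Corollary \ref{cor:kernel_bounds}.} The plan is to treat the corollary as a direct consequence of \cref{thm:fixed}, via substitution of known kernel-dependent bounds on the classical maximum information gain $\gamma_N$. Concretely, I would start from the high-probability bound supplied by \cref{thm:fixed},
\begin{align*}
R(T) \leq \sqrt{C(K)\,\beta_T\,KT\,\gamma_{\overline{T}}},
\end{align*}
and observe that under the fixed-cardinality assumption $\card{S}=K$ we have $\overline{T}=\sum_{t=1}^T\card{S_t}=KT$, so $\gamma_{\overline{T}}=\gamma_{KT}$. The factor $\beta_T=2\log(M\pi^2T^2/3\delta)$ contributes only a polylog factor and can therefore be absorbed into the $\tilde{O}(\cdot)$ notation, as can the constants $B$, $B'$, $\sigma^2$ hidden in $C(K)=8(B+B')^2(\lambda^*(K)+\sigma^2)$; the only dimension-dependent multiplicative factor that must be tracked through the computation is $\lambda^*(K)$.

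Next, I would plug in the kernel-specific bounds on $\gamma_{KT}$ taken off the shelf from \citep{srinivas2012information} and \citep{vakili2020information}. For the linear kernel on a compact $\XX\subset\mathbb{R}^D$, $\gamma_N=O(D\log N)$, which after substitution and square-rooting gives $\tilde{O}(\sqrt{\lambda^*(K)\,D\,K\,T})$, matching item (1). For the squared-exponential (RBF) kernel, $\gamma_N=O((\log N)^{D+1})$, so the entire dependence on $\gamma_{KT}$ is polylogarithmic in $KT$; pulling $\sqrt{\lambda^*(K)}$ out and bounding it by $\lambda^*(K)$ (valid for the regime $\lambda^*(K)\geq 1$ that covers the worst case $\lambda^*(K)=O(K)$) yields the claimed $\tilde{O}(\lambda^*(K)\sqrt{DKT})$. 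For the Matérn kernel with smoothness parameter $\nu>1$, I would invoke the tightened bound $\gamma_N=\tilde{O}(N^{D/(D+2\nu)})$ from \citep{vakili2020information}; substituting gives
\begin{align*}
\sqrt{\lambda^*(K)\,KT\cdot (KT)^{D/(D+2\nu)}} \;=\; \tilde{O}\!\bigl(\sqrt{\lambda^*(K)}\,K^{(D+\nu)/(D+2\nu)}\,T^{(D+\nu)/(D+2\nu)}\bigr),
\end{align*}
which simplifies to $\tilde{O}(\lambda^*(K)\,T^{(D+\nu)/(D+2\nu)})$ once $K$-dependent factors are absorbed into $\lambda^*(K)$ (again using $\sqrt{\lambda^*(K)}\leq \lambda^*(K)$ in the worst case).

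There is essentially no analytical obstacle beyond bookkeeping: the regret bound of \cref{thm:fixed} has already done the probabilistic work, and the kernel-specific bounds on $\gamma_N$ are quoted directly from prior literature. The only care required is (i) to make sure $\overline{T}$ is replaced by $KT$ rather than $T$ (so that the argument of $\gamma$ correctly reflects the total number of base-arm observations), (ii) to decide consistently where to fold log factors into $\tilde{O}$, and (iii) to decide where to use $\sqrt{\lambda^*(K)}$ versus the looser $\lambda^*(K)$ so that the stated forms in items (2) and (3) are reproduced verbatim. I would present the three cases as a single display that applies the generic identity $R(T)\leq \tilde{O}(\sqrt{\lambda^*(K)\,K\,T\,\gamma_{KT}})$ and then instantiates $\gamma_{KT}$ for each kernel, which keeps the corollary's proof to a few lines.
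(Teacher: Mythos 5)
Your proposal is correct and follows essentially the same route as the paper: the paper's proof is simply a direct substitution of the kernel-specific bounds on $\gamma_{\overline{T}}$ from Theorem 5 of \citep{srinivas2012information} (and the tighter Matérn bound from \citep{vakili2020information}) into the bound of \cref{thm:fixed}. Your additional bookkeeping about where $\overline{T}=KT$ enters, and how the $\lambda^*(K)$ and $K$ factors are absorbed to match the stated forms in items (2) and (3), is more explicit than what the paper provides but is consistent with it.
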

}

\section{Experiments} \label{sec:exp}
We perform \rev{two sets of experiments. In the first one, we use a synthetic \revn{dataset} and a privacy-aware federated learning setup to show that our algorithm outperforms the non-GP state-of-the-art while maintaining user privacy requirements. In the second experiment, we consider a content caching-aware movie recommendation setup.} We also perform further simulations in Appendix \ref{app:exp} to investigate the effect of the trade-off parameter, $\zeta$.

\subsection{Privacy-aware \revn{Federated Learning}}
\subsubsection{Setup}
We consider a privacy-aware federated learning setup comprised of a server and clients. The server's goal is to train a model using clients' data. However, clients do not wish to directly share their data with the server and instead train the model on a portion of their local \revn{dataset}, with the portion decided by the server, which they then send to the server. Thus, in each round, the server must serve requests to the available clients and select the portion of their \revn{dataset} to locally train on. Moreover, each client has different privacy requirements regarding the maximum amount of data that can be leaked.\\

In each round, we simulate the number of available clients using a Poisson distribution with mean $50$. Then, the server can make a request to each client to ask them to use $x$ portion of their \revn{dataset} for training the local model and then send the updated model to the server. We allow for $x \in \{0.01, 0.02, \ldots, 1.00\}$. Thus, for each client there are $100$ possible requests.
We then represent each request-client pair as a base arm with a one-dimensional context in $[0,1]$ that represents how much of the \revn{dataset} the request asks for the client to use. We allow for the same client to receive multiple requests, as a client may have different local \revn{dataset}s on which he can train multiple local models. The maximum number of requests that each client can simultaneously receive in one round follow a Poisson distribution with mean $5$.\\

After picking base arms (i.e., client-request pairs), the clients train the models on their local \revn{dataset}s and then return their model updates. We model the amount of useful information that the server gained from each request as a sigmoid-like function. More specifically, given context $x$, we have that the expected super arm outcome of $x$ is $f_1(x) = \frac{1}{1+\exp(5-10x)}$. This model is justified by observing that very little data leads to overfitting and hence the first flat region for small $x$, and once the amount of data surpasses some threshold, noticeable gain in the amount of useful information transferable from the local model can be seen, hence the linear portion of $f_1$. \revn{We define the random outcome to be $r_1(x) = f_1(x) + \eta_1$, where $\eta_1$ is zero-mean Gaussian noise with a standard deviation of $0.05$.} Lastly, as the amount of data keeps increasing, we have diminishing returns in the amount of useful information, captured by the flat region of $f_1$ for large $x$. Then, the super arm reward is the sum of all the information learned from all requests. Thus, $u(f_1(\bs x_S)) = \sum_{i=1}^{|S|} f_1(x_i)$.\\

We model each client's privacy requirement using a privacy leakage threshold in $[0,1]$, where clients intend for the amount of information leaked from their local \revn{dataset} to be below this threshold, where we sample the leakage threshold of each client uniformly at random from $[0,1]$. When clients train on larger portions of their \revn{dataset}s, more of their training set and hence information is leaked \cite{privacy_leakage}. We model this leakage as $f_2(x) = 0.05+0.95\exp(-5x)$. We justify this model by first noting that if very few training samples are used, then near-exact versions of the training samples can be extracted from the model updates or gradients \cite{privacy_leakage}. However, as the number of training samples increases, the leakage also decreases but never goes to $0$. Additionally, \revn{we define the random group outcome to be $r_2(x) = f_2(x) + \eta_2$, where $\eta_2$ is zero-mean Gaussian noise with a standard deviation of $0.05$.} Finally, defining groups as set of client-request pairs from the same client, group reward--better called loss in this scenario--is the total leakage amount of all the requests of the client.\footnote{Even though group reward is defined as the non-leakage probability, considering it as the total leakage is also acceptable as our proofs still hold.}

\subsubsection{Algorithms}
We run the simulation using a slightly modified version of our algorithm, STCGP-UCB, and the non-GP C3-MAB state-of-the-art algorithm, ACC-UCB of \cite{nika2020contextual}.

\textbf{STCGP-UCB}

We run a slightly modified version of our algorithm that uses sparse appromixation to GPs, called STCGP-UCB, where we use the sparse approximation to the GP posterior described in \cite{titsias2009variational}. In this sparse approximation, instead of using all of the arm contexts up to round $t$ to compute the posterior, a small $s$ element subset of them is used, called the inducing points. By using a sparse approximation, the time complexity of the posterior updating procedure reduces from $O(K^3 + (2K)^3 +\ldots +(KT)^3) = O(K^3T^4)$ to $O(s^2KT^2)$. In our simulation, we set $s=10$. We also set $\delta = 0.05$, $\zeta = 0.5$, and use two squared exponential kernels with both lengthscale and variance set to $1$.

\textbf{ACC-UCB}

We set $v_1 = 1, v_2 = 1, \rho = 0.5$, and $N=2$, as given in Definition 1 of \citep{nika2020contextual}. The initial (root) context cell, $X_{0,1}$, is a line centered at $(0.5)$.

\subsubsection{Results}
We run the simulation for $100$ rounds with eight independent runs, averaging over the results of each run. We plot the super arm and group regret in Figure \ref{fig:regret}. First, notice that ACC-UCB incurs linear group regret as it does not take group thresholds into account. On the other hand, STCGP-UCB incurs very low group regret as it accounts for group thresholds. Additionally, the super arm regret of STCGP-UCB is less than that of ACC-UCB, indicating that even though STCGP-UCB is balancing between minimizing both group and super arm regret, it still outperforms ACC-UCB. This is likely due to the use of GPs, which allow for faster convergence to $\bs f$. 

\begin{figure}[h]
\centering
\includegraphics[width=0.7\linewidth]{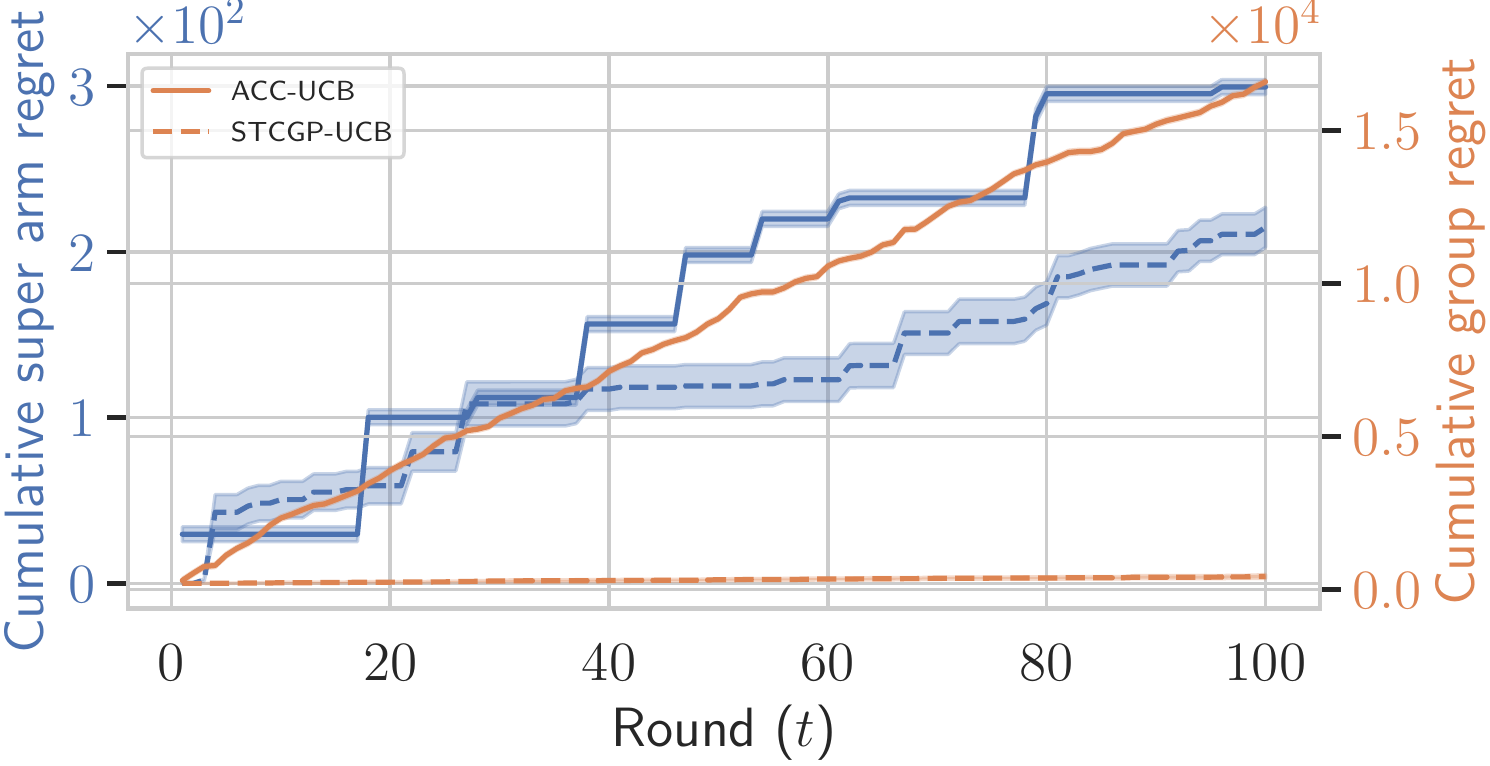}
\caption{Cumulative super arm and group regret of ACC-UCB and STCGP-UCB. Super arm and group regret are represented by blue and orange, respectively. Moreover, ACC-UCB and STCGP-UCB are represented by solid and dashed lines, respectively. Shaded regions indicate $\pm$ std.}
\label{fig:regret}
\end{figure}

\subsection{Caching-aware \revn{Movie Recommendation}}
\subsubsection{Setup}
We evaluate the proposed algorithm in a real-world \revn{dataset} where our setup, similar to that of \cite{nika2021gp}, is movie recommendation with content caching. We use the MovieLens 25M \revn{dataset} with a slight modification \citep{movielens}. This \revn{dataset} consists of users, movies, and the ratings given to the movies by users. In order to motivate our definition of groups, we also add a randomly generated location for each user coming from a set of $10$ different locations and define movie-user pairs (i.e., base arms) that have the same location as a group. Moreover, every movie has genre metadata coming from a group of $20$ genres such as, but not limited to, action, adventure, and comedy. Every rating given to a movie by a user falls between $0.5$ and $5.0$, with increments of $0.5$. In this \revn{dataset}, we take into account only the ratings after $2015$ and users who have rated at least $200$ movies.

In each round $t$, we randomly choose $H_t$ movies from the available set of movies, where $H_t$ is Poisson distributed with a mean of $75$. Next, we randomly select $P_t$ users who have reviewed the selected movies, where $P_t$ is Poisson distributed with a mean of $200$. If a user $j$ has rated a movie $i$, then the context of this base arm (i.e., the pair of movie $i$ and user $j$) is given by $x_{i,j}$. This context is computed as $x_{i,j} = \langle  \revn{\bs{u}_j, \bs{g}_i} \rangle / 10$, where $\revn{\bs{u}_j}$ is the average of the genres of the movies that user $j$ rated, weighed by their rating, $\revn{\bs{g}_i}$ is the genre vector of movie $i$, and $10$ is a normalizing factor. \footnote{We divide by $10$ and not $20$ to normalize the context because the maximum number of genres that a movie has in the \revn{dataset} is $10$.}

We let movie-user pairs that are in the same location be groups and combinations of various movie-user pairs be super arms. We let the feasible set of super arms be any super arm with a size of $K=20$. The expected base arm outcome function that is used for super arm reward computation, $f_1$, is given by $f_1(x_{i,j}) = x$ and the super arm reward is calculated as:
\begin{equation*}
    u(f_1(\bs x_{t, S})) = \sum_{j=1 }^{P_t} \sum_{i=1}^{H_t} f_1(x_{i, j})
\end{equation*}
which is subject to $x_{i,j} \in \bm{x}_{t, S}$. This is motivated by the fact that when we are trying to maximize super arm reward, we are actually trying to pick the super arm whose base arms (i.e., movie-user pairs) are the ones in which the users' interests were aligned with the genre of the movies that were recommended to them.

Then, we define the expected base arm outcome function that is used for group reward computation, $f_2$, as $f_2(x_{i, j}) = 2/(1+e^{-4x_{i,j}}) - 1$ and calculate the group reward according to the Dixit-Stiglitz model, as used in the simulations of \cite{chen2018contextual}, given by
\begin{equation*}
    v_G(f_2( \bs x_{G \cap S_t})) = \sum_{i=1}^{H_{t, l}} \left ( \sum_{j=1}^{P_{t, l}^i}  f_2(x_{i, j})^3 \right ) ^{1/3},
\end{equation*}
where $P_{t, l}^i$ is the number of users who have rated the \revn{$i$th} movie and reside in the \revn{$l$th} location, and $H_{t, l}$ is the number of movies that have been rated by at least one user residing in the \revn{$l$th} location. Notice that this is a submodular function and the generality of our method enables us to deal with such functions. The movies of the base arms of the chosen super arm are cached, therefore, groups want to have their base arms to be contained in the selected super arm as much as possible, hence, the outcome function $f_2$ is a sigmoid-like function which gives a higher weight to the movies having a higher rating. On the other hand, the outcome function $f_1$ is the identity function and treats all the base arms fairly without favoring one over the other. Lastly, we get the random outcome used in our simulations by adding zero-mean Gaussian noise with standard deviation $0.05$ to the expected outcomes.

\subsubsection{Algorithms}
We run the simulation using the same algorithms as the previous simulation, namely our algorithm with a sparse GP approximator, STCGP-UCB, and the non-GP C3-MAB state-of-the-art, ACC-UCB of \cite{nika2020contextual}.

\textbf{STCGP-UCB}
In our simulation, we set the number of inducing points to $s=5$. We also set $\delta = 0.05$, $\zeta = 0.5$, and use two Matern kernels with both lengthscale and variance set to $1$.

\textbf{ACC-UCB}

We set $v_1 = 1, v_2 = 1, \rho = 0.5$, and $N=2$, as given in Definition 1 of \citep{nika2020contextual}. The initial (root) context cell, $X_{0,1}$, is a line centered at $(0.5)$.

\subsubsection{Results}
We run the simulation for $200$ rounds with twenty independent runs, averaging over the results of each run. We plot the cumulative super arm reward of each algorithm divided by that of the benchmark in Figure \ref{fig:sim2_cum_reward}. As seen in the Figure, both algorithms managed to quickly learn which contexts are high-outcome, when it comes to the super arm reward. However, TCGP-UCB clearly learns much faster than ACC-UCB. In just $t=26$ rounds, TCGP-UCB achieves an average cumulative reward of $0.80 \pm 0.11$, whereas it takes ACC-UCB more than twice as many rounds to reach the same reward; it reaches $0.80 \pm 0.06$ in round $t=59$. Moreover, the final cumulative reward of TCGP-UCB is also higher than that of ACC-UCB, albeit not by much: $0.988 \pm 0.031$ compared with $0.970 \pm 0.027$. Both the quickness of learning and higher final reward can likely be attributed to the fact that TCGP-UCB uses a GP, whose posterior update allows for faster learning than the adaptive discretization approach of ACC-UCB.

We also recorded the number of satisfied groups in each round for each algorithm. In other words, we kept track of the percentage of groups corresponding to the picked super arm in each round had their constraints met. We plot this result in Figure \ref{fig:sim2_grp_pct}. First, we observe that the percentage of group constraints that TCGP-UCB satisfies quickly converges above $0.95$ in $50$ rounds. However, ACC-UCB struggles greatly with satisfying group constraints and manages to converge to around $50\%$ of group constraints met. This discrepancy is simply because TCGP-UCB takes group constraints into account, while ACC-UCB does not.

\begin{figure}[!tb]
\centering
\begin{minipage}{.48\textwidth}
  \centering
  \includegraphics[width=1\linewidth]{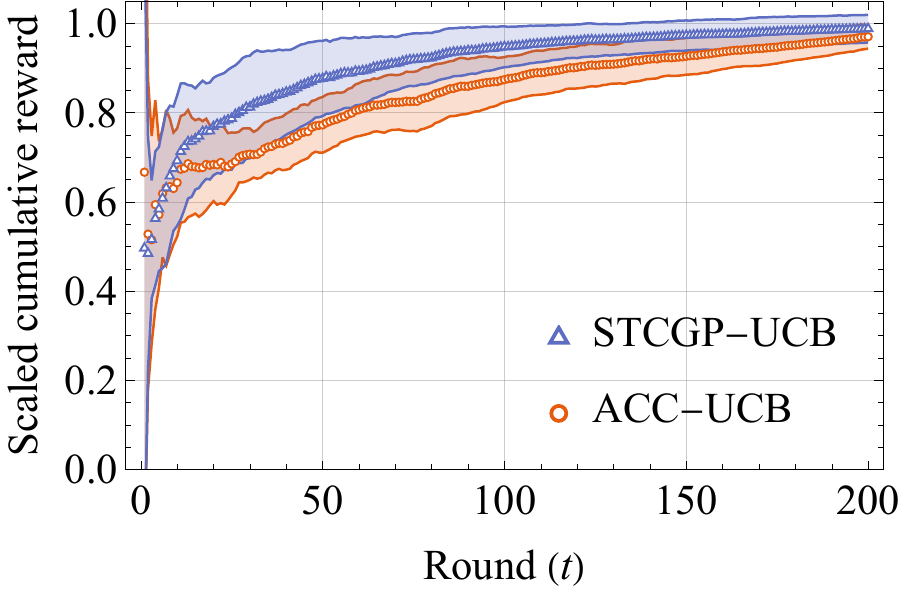}
  \caption{Cumulative super arm reward of each algorithm in each round divided by that of the benchmark, averaged over twenty independent runs. Error bands represent $\pm$ std.}
  \label{fig:sim2_cum_reward}
\end{minipage}%
\hfill
\begin{minipage}{.48\textwidth}
  \centering
  \includegraphics[width=1\linewidth]{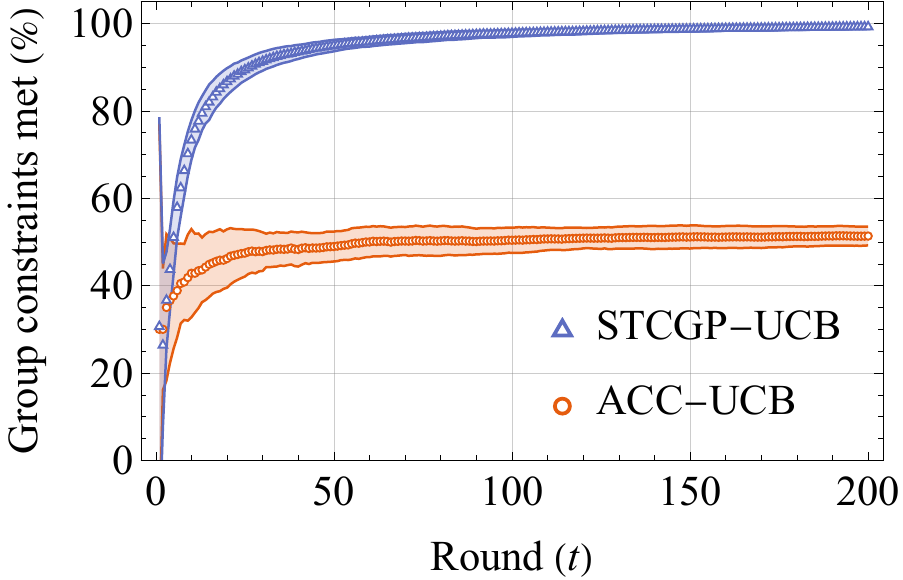}
  \caption{Percentage of groups corresponding to picked super arms in each round that satisfied their constraints, averaged over twenty independent runs. Error bands represent $\pm$ std.}
  \label{fig:sim2_grp_pct}
\end{minipage}
\end{figure}

\section{Conclusion}
We considered the C3-MAB problem with semi-bandit feedback, where in each round the agent has to play a feasible subset of the base arms in order to maximize the cumulative reward while satisfy group constraints. Our setup was motivated by privacy aware federated learning, where the goal is to optimize overall training result while satisfying clients' privacy requirements. Under the assumption that the expected base arm outcomes are drawn from a two-output GP and that the expected reward is Lipschitz continuous with respect to the expected base arm outcomes, we proposed TCGP-UCB, a double UCB algorithm that incurs $\tilde{O}(\sqrt{\lambda^*(K)KT\overline{\gamma}_{T}})$ regret in $T$ rounds. \rev{In experiments, we showed that sparse GPs can be used to speed up UCB computation, while simultaneously outperforming the state-of-the-art non-GP-based C3-MAB algorithm in a synthetic and real-world setup}. Our comparisons also indicated that GPs can transfer knowledge among contexts better than partitioning the contexts into groups of similar contexts based on a similarity metric. An interesting future research direction involves investigating how dependencies between base arms can be used for more efficient exploration. Then, when the oracle selects base arms sequentially, it is possible to update the posterior variances of the not yet selected base arms by conditioning on the selected, but not yet observed, base arms.


\newpage

\onecolumn
\appendix




\bibliography{main}
\bibliographystyle{tmlr}
\clearpage
\appendix
\section{Table of Notation}
 We provide a table of notation so that the proof section can be easily followed.
\begin{table}[H]
	\begin{center}
		\label{tab:table1}
		\begin{tabular}{ll}
			\toprule 
			\textbf{Symbol} & \textbf{Explanation} \\
			\midrule 
			$m$ & Variable to denote a base arm \\
			$G$ & Variable to denote a group \\
			$S$ & Variable to denote a super arm \\
			$S_t$ & Variable to denote the chosen super arm at round $t$ \\
			$\mathcal M_{t}$ & Set of base arms that are available at round $t$  \\
			$M_{t}$ & Number of base arms that become available at round $t$ \\
			$\mathcal{G}_t$ & Set of feasible groups in round $t$ \\
			$\mathcal{{G}}_{t, \text{good}}$ & Set of groups whose expected rewards are above their thresholds \\
			$\mathcal{S}_t$ & Set of feasible super arms in round $t$ \\
			${\cal S}'_t$ & Set of super arms whose corresponding groups satisfy their thresholds \\
			$\mathcal{\hat{S}}'_t$ &  Set of feasible super arms whose corresponding groups' reward indices are above their thresholds \\
			$\mathcal{S}$ & Overall feasible set of super arms \\
			$\mathcal X$ & Context set \\
			$\mathcal X_{t}$ & Set of available contexts in round $t$ \\
			$x_{t,m}$ & Context associated with base arm $m$ \\
			$\bm{x}_{t,G}$ & Context vector of base arms in $G$ \\
			$\bm{x}_{t,S}$ & Context vector of base arms in $S$ \\
			$\bs r(x)$ & Random outcome of base arm with context $x$ \\
			$\bs f(x)$ & Expected outcome of base arm with context $x$ \\
			$\bs \eta$ & $\mathcal{N}(\bs 0, \sigma^2 \bs I)$ independent observation noise\\
			$U(S, r_1(\bm{x}_{t,S}))$ & Random super arm reward \\
			$u(S, f_1(\bm{x}_{t,S}))$ & Expected super arm reward function \\
			$V_G(r_2(\bm{x}_{t,G \cap S_t}))$ & Random group reward \\
			$v_G(f_2(\bm{x}_{t,G \cap S_t}))$ & Expected group reward function \\
			$\gamma_{t,G}$ & Threshold for group $G$ at round $t$ \\
			$\zeta$ & Trade-off parameter \\
			$i_{t}(x_{t,m})$ & Index of base arm $m$ at round $t$ which is given to Oracle$_{\text{spr}}$ (reward index) \\
			$i'_t(x_{t, m})$ & Index of base arm $m$ at round $t$ which is given to Oracle$_{\text{grp}}$ (satisfying index) \\
			$\overline{\gamma}_T$ & Maximum information gain which is associated with the context arrivals $\mathcal{X}_1,\ldots,\mathcal{X}_T$ \\
			$\gamma_N$ & Maximum information gain given $N$ base arm outcome observations \\
			$K$ & Maximum possible number of base arms in a super arm \\
 			\bottomrule 
		\end{tabular}
	\end{center}
\end{table}
When we state that  $a \geq b$ with $a$ and $b$ being vectors, we mean that every component of $a$ is greater than or equal to the corresponding component of $b$. This holds for other comparison operators as well. Also, we omit $G$ from $v_G$ and $V_G$; and $S$ from $U(S, \ldots)$ and $u(S, \ldots)$ when it is obvious from the context. In the definition of ${\cal S}'_t$ and $\mathcal{\hat{S}}'_t$, the term corresponding groups means the groups that contain the base arms of the chosen super arm.

\section{Additional Experimental Results} \label{app:exp}
\subsection{Changing Trade-off Parameter ($\zeta$)}
In this simulation we showcase the behavior of our algorithm for changing trade-off parameter ($\zeta$) values.

\subsubsection{Setup}
We use a synthetic setup where we generate the arm and group data needed for the simulation. Similar to the main paper simulations, in each round $t$, we first sample the number of groups, $|{\cal G}_t|$, from a Poisson distribution with mean $50$. Then, for each group we generate the contexts of the base arms in that group, where the number of base arms is sampled from a Poisson distribution with mean $5$. Each base arm has a two-dimensional (2D) context that is sampled uniformly from $[0, 1]^2$. Then, we sample the expected outcome of each base arm of each group from a GP with zero mean and two squared exponential kernels, each given by
\begin{align*}
	k(x, x') = \exp\left(-\frac{1}{2l^2} \Vert x - x'\Vert \right),
\end{align*}
where we set $l = 0.5$. Note that given that we run the simulation for $T = 100$ rounds, there will be an expected number of $25000$ arms and to sample a GP function with that many points we will need to compute the Cholesky decomposition of a $625$ million element matrix, which would be very resource-heavy. Instead, we first sample  $6000$ 2D contexts from $[0, 1]^2$ and then sample the GP function at those points. Then, during our simulation, we sample each base arm's context $x$ and corresponding expected outcome $\bs f(x)$ from the generated sets. Finally, we set $\bs r(x) = \bs f(x) + \bs\eta$, where $\bs \eta \sim \mathcal{N}(\bs{0}, 0.1^2\bs{I}_2)$. We set the group reward, $v$, to be the variance of the outcomes in the group and we set the threshold to be the $80\%$ percentile of the group rewards of all groups in all rounds of the simulation. We use a high percentile value to increase the difficulty of the group thresholding, so that minimizing super arm regret does not necessarily yield minimizing group regret. Finally, the super arm reward is the linear sum of the base arms.

\subsubsection{Results}

We run our algorithm using five different values of $\zeta$, linearly spaced between $0.001$ and $0.999$. Figure \ref{fig:sup_regret} shows the final super arm and group regret of each $\zeta$ run (i.e., the cumulative regrets at the end of the simulation). First, notice a trade-off between super arm and group regret. As one increases, the other decreases. This is expected because to minimize group regret, groups with arms whose outcomes are spread out and have high variance must be picked, but to minimize super arm regret, groups with high outcomes must be picked. Second, as $\zeta$ increases, the super arm regret increases while the group regret decreases. This is because the variance of the indices given to the first oracle ($i'$), which determines which groups pass their thresholds, decreases as $\zeta$ approaches $1$. Thus, the larger $\zeta$ is, the stricter the first oracle is. Conversely, the smaller $\zeta$ is the laxer the first oracle and the stricter the second oracle, which determines which super arm to play.

\begin{figure}[H]
	\centering
	\includegraphics[width=0.7\textwidth]{{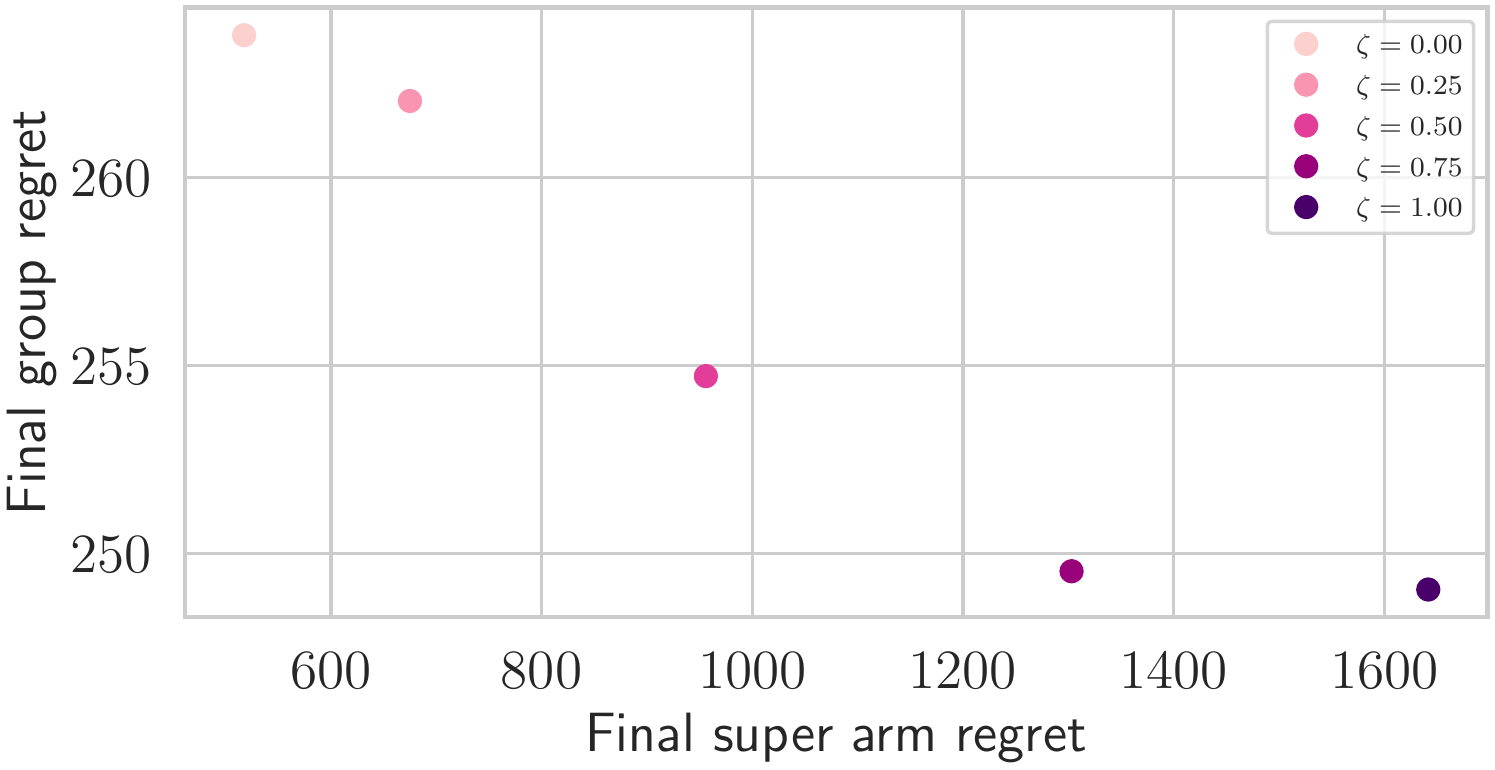}}
	\caption{Final super arm and group regret for different trade-off parameter, $\zeta$, values}
	\label{fig:sup_regret}
\end{figure}

\section{Proofs}

\subsection{\revn{Auxiliary Proofs}}

\revn{In this part, we will prove results that will be used in the proofs of the theorems in the paper.}

Throughout this section, we take $j \in \{1, 2\}$. We denote by ${\bm{r}_j}_{\llbracket t-1 \rrbracket}$ the vector of base arm outcome observations made until the beginning of round $t$, where
\begin{align*}
	{\bm{r}_j}_{\llbracket t-1 \rrbracket} = [r^T_j(\bm{x}_{1, S_1}), \ldots, r^T_j(\bm{x}_{t-1, S_{t-1}})]^T.
\end{align*}

 For any $t \geq 1$, the posterior distribution of $f_j(x)$ given the observation vector ${\bm{r}_j}_{\llbracket t-1 \rrbracket}$ is $\mathcal{N}({\mu_j}_{\llbracket t-1 \rrbracket}(x), ({\sigma_j}_{\llbracket t-1 \rrbracket}(x))^2)$, for any $x \in {\cal X}_t$. In our analysis, we will resort to the following Gaussian tail bound
\begin{align}
	\mathbb{P} \left ( | f_j(x)-{\mu_j}_{\llbracket t-1 \rrbracket}(x) | > (\sqrt{\beta_{t}}){\sigma_j}_{\llbracket t-1 \rrbracket}(x) \Big| {\bm{r}_j}_{\llbracket t-1 \rrbracket} \right ) \leq 2 \exp \left ( {\frac{-\beta_{t}}{2}} \right ) \text{ for $\beta_{t} \geq 0$}\label{eq0} ~.
\end{align}
The following lemma shows that the base arm indices upper bound the expected outcomes with high probability.

\begin{lem} \label{lem:high_prob_delta}
(Lemma 1 of \cite{nika2021gp}) Fix $\delta \in (0, 1)$, and set $\beta_{t} := 2\log{(M\pi^2t^2/3\delta)}$. Let ${\cal F}_j := \{ \forall t \geq 1, \forall x \in {\cal X}_t : | f_j(x)-{\mu_j}_{\llbracket t-1 \rrbracket}(x) | \leq (\sqrt{\beta_{t}}) {\sigma_j}_{\llbracket t-1 \rrbracket}(x) \}$. We have $\mathbb{P}({\cal F}_j) \geq 1 - \delta$.
\end{lem}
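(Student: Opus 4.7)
The plan is to establish this as a standard union bound argument on the Gaussian tail inequality stated immediately before the lemma. First, I would fix a round $t \geq 1$ and a context $x \in \mathcal{X}_t$. Since the posterior of $f_j(x)$ given the observation vector ${\bm{r}_j}_{\llbracket t-1 \rrbracket}$ is $\mathcal{N}({\mu_j}_{\llbracket t-1 \rrbracket}(x), ({\sigma_j}_{\llbracket t-1 \rrbracket}(x))^2)$, the cited tail bound yields
\[
\mathbb{P}\!\left(|f_j(x)-{\mu_j}_{\llbracket t-1 \rrbracket}(x)| > \sqrt{\beta_t}\, {\sigma_j}_{\llbracket t-1 \rrbracket}(x)\,\Big|\,{\bm{r}_j}_{\llbracket t-1 \rrbracket}\right) \leq 2\exp(-\beta_t/2).
\]
Integrating out the conditioning (equivalently, taking total expectation) gives the same unconditional upper bound $2\exp(-\beta_t/2)$.

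Next, I would apply a union bound within round $t$ over all $x \in \mathcal{X}_t$. Since $|\mathcal{X}_t| = M_t \leq M$ by the bounded-availability assumption, the probability that the confidence bound fails for \emph{some} $x \in \mathcal{X}_t$ is at most $2 M \exp(-\beta_t/2)$. Then I would apply a second union bound across rounds $t \geq 1$, so that the total failure probability of ${\cal F}_j$ is at most
\[
\sum_{t=1}^{\infty} 2M\exp(-\beta_t/2).
\]

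The final step is to verify that the stated choice $\beta_t = 2\log(M\pi^2 t^2/(3\delta))$ makes this sum collapse to exactly $\delta$. Substituting, $\exp(-\beta_t/2) = 3\delta/(M\pi^2 t^2)$, so the sum becomes $\sum_{t=1}^{\infty} 6\delta/(\pi^2 t^2) = (6\delta/\pi^2)(\pi^2/6) = \delta$, using the Basel identity. Taking complements yields $\mathbb{P}(\mathcal{F}_j) \geq 1-\delta$, as desired.

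There is no real obstacle here, since the argument is a textbook two-level union bound once the pointwise Gaussian tail estimate and the fixed-availability assumption are in place; the only subtle point is the implicit handling of the conditioning on ${\bm{r}_j}_{\llbracket t-1 \rrbracket}$, which is dispatched by the tower property of expectation. The fact that the sequence $\{\mathcal{X}_t\}$ is treated as fixed in the regret analysis (as noted earlier in the paper) ensures that no measurability issue arises in applying the union bound over contexts.
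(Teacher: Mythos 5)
Your argument is correct and is exactly the standard proof the paper defers to by citing Lemma 1 of \cite{nika2021gp}: the conditional Gaussian tail bound of \eqref{eq0}, the tower property, a union bound over the at most $M$ contexts per round and over all rounds, and the verification that $\beta_t = 2\log(M\pi^2 t^2/(3\delta))$ makes the failure probabilities sum to $\delta$ via $\sum_t t^{-2} = \pi^2/6$. Nothing is missing.
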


Now, we state that the modified indices upper bound the expected base arm outcomes with high probability under the events ${\cal F}_j$.

\begin{lem} \label{lem:UCB_indices}
\revn{The following arguments hold under ${\cal F}_1 \cap {\cal F}_2$.}
\begin{align}
	i_t(x) & \geq f_1(x), \forall t \geq 1, ~ \forall x \in {\cal X}_t \nonumber \\
	i'_t(x) & \geq f_2(x), \forall t \geq 1, ~ \forall x \in {\cal X}_t ~. 
\end{align}
\end{lem}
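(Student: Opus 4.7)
The plan is very short: this is an essentially immediate corollary of \cref{lem:high_prob_delta}. Since the modified indices $i_t$ and $i'_t$ are obtained from the standard GP-UCB index ${\mu_j}_{\llbracket t-1 \rrbracket}(x) + \sqrt{\beta_t}\,{\sigma_j}_{\llbracket t-1 \rrbracket}(x)$ merely by inflating the confidence width by the scalar factors $\tfrac{1}{1-\zeta}$ and $\tfrac{1}{\zeta}$ respectively, both of which are $\geq 1$ for $\zeta \in (0,1)$, the new indices are pointwise no smaller than the unmodified GP-UCB index.

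The steps I would carry out are as follows. First, fix $t \geq 1$ and $x \in \mathcal{X}_t$, and condition on the events $\mathcal{F}_1 \cap \mathcal{F}_2$. By the definition of $\mathcal{F}_j$, we have the two-sided bound $|f_j(x) - {\mu_j}_{\llbracket t-1 \rrbracket}(x)| \leq \sqrt{\beta_t}\,{\sigma_j}_{\llbracket t-1 \rrbracket}(x)$; in particular the one-sided consequence
\begin{align*}
f_j(x) \leq {\mu_j}_{\llbracket t-1 \rrbracket}(x) + \sqrt{\beta_t}\,{\sigma_j}_{\llbracket t-1 \rrbracket}(x).
\end{align*}
Second, observe that for $\zeta \in (0,1)$ we have $\tfrac{1}{1-\zeta} \geq 1$ and $\tfrac{1}{\zeta} \geq 1$, while ${\sigma_j}_{\llbracket t-1 \rrbracket}(x) \geq 0$ and $\sqrt{\beta_t} \geq 0$. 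Multiplying the nonnegative quantity $\sqrt{\beta_t}\,{\sigma_j}_{\llbracket t-1 \rrbracket}(x)$ by a factor $\geq 1$ only enlarges it, so
\begin{align*}
f_1(x) &\leq {\mu_1}_{\llbracket t-1 \rrbracket}(x) + \tfrac{1}{1-\zeta}\sqrt{\beta_t}\,{\sigma_1}_{\llbracket t-1 \rrbracket}(x) = i_t(x),\\
f_2(x) &\leq {\mu_2}_{\llbracket t-1 \rrbracket}(x) + \tfrac{1}{\zeta}\sqrt{\beta_t}\,{\sigma_2}_{\llbracket t-1 \rrbracket}(x) = i'_t(x).
\end{align*}
Since $t$ and $x$ were arbitrary, both inequalities hold for all $t \geq 1$ and all $x \in \mathcal{X}_t$ on $\mathcal{F}_1 \cap \mathcal{F}_2$.

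There is no real obstacle here; the only minor point to note is the boundary behavior at $\zeta \in \{0,1\}$, where one of $i_t$ or $i'_t$ is formally $+\infty$ and the inequality becomes trivial. One could also briefly remark on why inflating the confidence interval is the natural way to enforce the double-UCB structure: it ensures that the group oracle is more conservative about declaring constraint satisfaction as $\zeta \to 1$, and the super-arm oracle is more exploratory as $\zeta \to 0$, while preserving the optimism-under-uncertainty property needed for the regret bounds in \cref{thm:regret}.
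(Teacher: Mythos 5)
Your proof is correct and follows essentially the same route as the paper's: apply the one-sided consequence of the definition of $\mathcal{F}_j$ and then observe that the widths are only inflated since $\tfrac{1}{1-\zeta}\geq 1$ and $\tfrac{1}{\zeta}\geq 1$ for $\zeta\in[0,1]$, with $\sqrt{\beta_t}\,{\sigma_j}_{\llbracket t-1\rrbracket}(x)\geq 0$. The extra remarks on the boundary cases and the double-UCB intuition are fine but not needed.
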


\begin{proof}
Fix $t \geq 1$ and $x \in {\cal X}_t$. Under event ${\cal F}_1$, we have the following chain of inequalities
\begin{align}
	f_1(x) - {\mu_1}_{\llbracket t-1 \rrbracket}(x) & \leq (\sqrt{\beta_{t}}) {\sigma_1}_{\llbracket t-1 \rrbracket}(x) \nonumber \\
	f_1(x) - {\mu_1}_{\llbracket t-1 \rrbracket}(x) & \leq \frac1{1-\zeta} (\sqrt{\beta_{t}}) {\sigma_1}_{\llbracket t-1 \rrbracket}(x) \label{eq1} \\
	f_1(x) & \leq {\mu_1}_{\llbracket t-1 \rrbracket}(x) + \frac1{1-\zeta} (\sqrt{\beta_{t}}) {\sigma_1}_{\llbracket t-1 \rrbracket}(x)  =  i_t(x) ~. \nonumber 
\end{align}
Proceeding in the same fashion for event ${\cal F}_2$ , we obtain
\begin{align}
	f_2(x) - {\mu_2}_{\llbracket t-1 \rrbracket}(x) & \leq \frac1{\zeta} (\sqrt{\beta_{t}}) {\sigma_2}_{\llbracket t-1 \rrbracket}(x) \label{eq2} \\
	f_2(x) & \leq {\mu_2}_{\llbracket t-1 \rrbracket}(x) + \frac1{\zeta} (\sqrt{\beta_{t}}) {\sigma_2}_{\llbracket t-1 \rrbracket}(x) =  i'_t(x) ~, \nonumber
\end{align}
where \eqref{eq1} and \eqref{eq2} follow from the fact that $\frac1{1-\zeta} \geq 1$ and $\frac1{\zeta} \geq 1$ when $\zeta \in [0, 1]$.
    
\end{proof}

Next, we show that the set of feasible super arms whose corresponding groups satisfy their thresholds is a subset of the set of feasible super arms whose groups' reward indices are above their thresholds. We later use this result in Lemma \ref{lem:alpha_optimality}.

\begin{lem} \label{lem:subset}
Fix $\delta \in (0, 1)$. The following argument holds under the event ${\cal F}_2$ when the group reward function $v$ satisfies the monotonicity assumption given in Assumption 1. 
\begin{align}
	{\cal S}'_t \subseteq \mathcal{\hat{S}}'_t, ~\forall t \geq 1 .\nonumber
\end{align}
\end{lem}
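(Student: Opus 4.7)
The plan is to fix an arbitrary super arm $S\in {\cal S}'_t$ and show that it automatically belongs to $\hat{\cal S}'_t$. To do this, it suffices to check, for each group $G\in {\cal G}_t$, that the intersection $S\cap G$ lies in $\mathcal{G}_{t,\text{good}}$, i.e.\ that the index-based estimate of the group reward on $S\cap G$ meets the threshold $\gamma_{t,G}$.

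The argument will proceed in three short steps. First, by definition of ${\cal S}'_t$, for every $G\in {\cal G}_t$ we have the true-reward inequality $v_G(f_2(\bm{x}_{t,G\cap S}))\geq \gamma_{t,G}$. Second, invoking Lemma \ref{lem:UCB_indices} under the event ${\cal F}_2$, the satisfying indices upper bound the expected outcome componentwise: $i'_t(x)\geq f_2(x)$ for every $t\geq 1$ and every $x\in{\cal X}_t$, and in particular for each coordinate of the context vector $\bm{x}_{t,G\cap S}$. Third, applying the monotonicity assumption (Assumption \ref{ass:g_reward_mono}) on $v_G$ to the pair of vectors $f_2(\bm{x}_{t,G\cap S})\leq i'_t(\bm{x}_{t,G\cap S})$, we conclude
\begin{align*}
v_G\bigl(i'_t(\bm{x}_{t,G\cap S})\bigr)\;\geq\; v_G\bigl(f_2(\bm{x}_{t,G\cap S})\bigr)\;\geq\;\gamma_{t,G}.
\end{align*}
This means $S\cap G\in \mathcal{G}_{t,\text{good}}$, and since $G$ was arbitrary, $S\in\hat{\cal S}'_t$.

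There is no real obstacle here: the claim is essentially a one-line consequence of the optimism guaranteed by Lemma \ref{lem:UCB_indices} combined with the monotonicity of $v_G$. The only point that needs a moment of care is making sure the inequality $i'_t\geq f_2$ is applied coordinatewise on the same context vector $\bm{x}_{t,G\cap S}$ that defines membership in ${\cal S}'_t$ and $\mathcal{G}_{t,\text{good}}$, so that Assumption \ref{ass:g_reward_mono} applies verbatim with $\bm h = f_2(\bm{x}_{t,G\cap S})$ and $\bm g = i'_t(\bm{x}_{t,G\cap S})$ on the subset $A=S\cap G\subseteq G$.
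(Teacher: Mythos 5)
Your proof is correct and follows essentially the same route as the paper's: unpack the definition of ${\cal S}'_t$, use the optimism $i'_t \geq f_2$ under ${\cal F}_2$ from Lemma \ref{lem:UCB_indices}, apply monotonicity of $v_G$ coordinatewise on $\bm{x}_{t,G\cap S}$, and conclude membership in $\mathcal{G}_{t,\text{good}}$ and hence in $\hat{\cal S}'_t$. No gaps.
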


\begin{proof}
For any $S_t$ and for all $G$ we have:
\begin{align}
	S_t \in {\cal S}'_t & \Longleftrightarrow v(f_2(\bm{x}_{t, G \cap S_t})) \geq \gamma_{t, G} \label{eq3} \\
	& \Longleftrightarrow v(f_2(\bm{x}_{t, G \cap S_t})) - \gamma_{t, G} \geq 0 \nonumber \\
	& \Longrightarrow v(i'_t(\bm{x}_{t, G \cap S_t})) - \gamma_{t, G} \geq 0 \label{eq4}\\
	& \Longleftrightarrow S_t \in \mathcal{\hat{S}}'_t \label{eq5}
\end{align}
where \eqref{eq3} follows from the definition of ${\cal S}'_t$, \eqref{eq4} follows from the inequality that $v(i'_t(\bm{x}_{t, G \cap S_t})) \geq v(f_2(\bm{x}_{t, G \cap S_t}))$. Since $i'_t(\bm{x}_{t, G \cap S_t}) \geq f_2(\bm{x}_{t, G \cap S_t})$ under the event ${\cal F}_2$ and $v$ is monotone by assumption, this inequality is valid. Lastly, \eqref{eq5} follows from the fact that Oracle$_{\text{grp}}$ is an exact oracle and will return the groups where $v(i'_t(\bm{x}_{t, G \cap S_t})) > \gamma_{t, G}$. As this reasoning is true for any $S_t$, we indicate that ${\cal S}'_t \subseteq \mathcal{\hat{S}}'_t$.

Detail:
\begin{align*}	
{\cal S}'_t &:= \{ S \in {\cal S}_t : \forall (G \in {\cal G}_t), v_G(f_2(\bm{x}_{t, G \cap S})) \geq \gamma_{t, G} \}.  \\
\mathcal{\hat{S}}'_t  &:= \{S \in \mathcal{S}_t : \forall G \in \mathcal{G}_t, ~ S\cap G \in \mathcal{G}_{t,\text{good}}\} \\
\mathcal{G}_{t,\text{good}} &:= \{G' \subseteq G \mid G \in \mathcal{G}_t \text{ and } v_G(i'_{t}(\bs{x}_{t,G'})) \geq \gamma_{t,G'} \} \\
i'_{t}(x_{t,m}) &:= {\mu_2}_{\llbracket t-1 \rrbracket}(x_{t,m}) + \frac1{\zeta} (\sqrt{\beta_{t}}) {\sigma_2}_{\llbracket t-1 \rrbracket}(x_{t,m}).
\end{align*}

\end{proof}

Next, we upper bound the group regret in terms of the posterior variances of base arms. Note that group regret is incurred when a selected group's expected reward is below its threshold whereas its index is above. Therefore, whenever a group $G$ incurs group regret in round $t$, then $v(f_2(\bm{x}_{t, G \cap S_t})) < \gamma_{t, G} < v(i'_t(\bm{x}_{t, G \cap S_t}))$ must happen. This observation plays a key role in the analysis of the next lemma. Moreover, we use the notation $\tilde{x}_{t,k}$ to denote the context of the \revn{$k$th} base arm in $G \cap S_t$ at round $t$ for convenience, unless otherwise stated.

\begin{lem} \label{lem:group_threshold}
Fix $t \geq 1$, and consider $G \in {\cal G}_t$. The following argument holds under the event ${\cal F}_2$:
\begin{align}
	[\gamma_{t, G} - v(f_2(\bm{x}_{t, G \cap S_t}))]_{+} \leq  \left ( \frac{\zeta+1}{\zeta} \right ) B \sqrt{\beta_{t}} \sum^{|G \cap S_t|}_{k=1}   |{\sigma_2}_{\llbracket t-1 \rrbracket} (\tilde{x}_{t,k})| ~.
\end{align}
\end{lem}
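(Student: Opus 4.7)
The plan is to handle the $[\cdot]_+$ by splitting into two cases: if the quantity inside is $\leq 0$, the bound holds trivially since the right-hand side is non-negative. So the interesting case is when $\gamma_{t,G} > v(f_2(\bm{x}_{t, G \cap S_t}))$. The key observation is that the chosen super arm $S_t$ was produced by an oracle that believes each $S_t \cap G$ lies in $\mathcal{G}_{t,\text{good}}$ (the non-emptiness of $\mathcal{G}_{t,\text{good}}$ is guaranteed under ${\cal F}_2$ via Assumption 5 and \cref{lem:subset}), so that by definition of $\mathcal{G}_{t,\text{good}}$ we have $v(i'_t(\bm{x}_{t, G \cap S_t})) \geq \gamma_{t,G}$. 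This lets me upper bound $\gamma_{t,G}$ by the indexed value of $v$.

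The second step is to peel off $f_2$ from $v(f_2(\cdot))$ using the Lipschitz assumption on $v$ (Assumption 6, bounded by $B$):
\begin{align*}
\gamma_{t,G} - v(f_2(\bm{x}_{t, G \cap S_t})) \;\leq\; v(i'_t(\bm{x}_{t, G \cap S_t})) - v(f_2(\bm{x}_{t, G \cap S_t})) \;\leq\; B \sum_{k=1}^{|G \cap S_t|} |i'_t(\tilde{x}_{t,k}) - f_2(\tilde{x}_{t,k})|.
\end{align*}
Under ${\cal F}_2$, \cref{lem:UCB_indices} tells us $i'_t \geq f_2$ pointwise, so the absolute value drops.

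Finally, I would bound each term $i'_t(\tilde{x}_{t,k}) - f_2(\tilde{x}_{t,k})$ by adding and subtracting ${\mu_2}_{\llbracket t-1 \rrbracket}(\tilde{x}_{t,k})$. The definition of $i'_t$ contributes $\tfrac{1}{\zeta}\sqrt{\beta_t}\,{\sigma_2}_{\llbracket t-1 \rrbracket}(\tilde{x}_{t,k})$ and the event ${\cal F}_2$ bound $f_2(\tilde{x}_{t,k}) \geq {\mu_2}_{\llbracket t-1 \rrbracket}(\tilde{x}_{t,k}) - \sqrt{\beta_t}\,{\sigma_2}_{\llbracket t-1 \rrbracket}(\tilde{x}_{t,k})$ contributes the additional $\sqrt{\beta_t}\,{\sigma_2}_{\llbracket t-1 \rrbracket}(\tilde{x}_{t,k})$, summing to $\tfrac{\zeta+1}{\zeta}\sqrt{\beta_t}\,{\sigma_2}_{\llbracket t-1 \rrbracket}(\tilde{x}_{t,k})$. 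Plugging back gives exactly the claimed bound.

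I do not expect any real obstacle here — the argument is a routine composition of the index definition, the confidence event, Lipschitzness of $v$, and the oracle's filtering property established in \cref{lem:subset}. The only subtle point worth flagging in the write-up is justifying why we may use $\gamma_{t,G} \leq v(i'_t(\bm{x}_{t,G \cap S_t}))$ for the selected super arm: this requires that Oracle$_{\text{spr}}$ actually returns a super arm from $\hat{\mathcal{S}}'_t$ (which is non-empty under ${\cal F}_2$ by Assumption 5 combined with \cref{lem:subset}), rather than the fallback $\mathcal{S}_t$ used only when $\mathcal{G}_{t,\text{good}} = \emptyset$.
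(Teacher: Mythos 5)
Your argument is correct and follows essentially the same route as the paper's proof: upper bound $\gamma_{t,G}$ by $v(i'_t(\bm{x}_{t,G\cap S_t}))$ via the oracle's filtering property, peel off $v$ with Lipschitz continuity, and split each $i'_t - f_2$ term into the $\tfrac{1}{\zeta}\sqrt{\beta_t}\,\sigma_2$ index bonus plus the $\sqrt{\beta_t}\,\sigma_2$ confidence width under ${\cal F}_2$; your explicit justification of why $v(i'_t(\bm{x}_{t,G\cap S_t}))\geq\gamma_{t,G}$ for the selected super arm is in fact slightly more careful than the paper's, which asserts it implicitly. The only nit is that you have swapped the labels of the group-Lipschitz assumption and the nonempty-${\cal S}'_t$ assumption in your citations.
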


\begin{proof}
$[\gamma_{t, G} - v(f_2(\bm{x}_{t, G \cap S_t}))]_{+} > 0$ implies that $v(f_2(\bm{x}_{t, G \cap S_t})) < \gamma_{t, G}$ and $v(i'_t(\bm{x}_{t, G \cap S_t})) \geq \gamma_{t, G}$. Therefore, whenever $G$ incurs group regret it holds that $v(f_2(\bm{x}_{t, G \cap S_t})) < \gamma_{t, G} \leq v(i'_t(\bm{x}_{t, G \cap S_t}))$.
\begin{align}
	0 < \gamma_{t, G} - v(f_2(\bm{x}_{t, G \cap S_t})) & < v(i'_t(\bm{x}_{t, G \cap S_t})) - v(f_2(\bm{x}_{t, G \cap S_t})) \nonumber \\
	0 < [\gamma_{t, G} - v(f_2(\bm{x}_{t, G \cap S_t}))]_{+} & < v(i'_t(\bm{x}_{t, G \cap S_t})) - v(f_2(\bm{x}_{t, G \cap S_t})) \nonumber \\
	& \leq B_{G \cap S_t} \sum^{|G \cap S_t|}_{k=1} | i'_t(\tilde{x}_{t,k})  - f_2(\tilde{x}_{t,k})|\label{eq6}\\
	& \leq B_{G \cap S_t} \sum^{|G \cap S_t|}_{k=1} | {\mu_2}_{\llbracket t-1 \rrbracket}(\tilde{x}_{t,k}) - f_2(\tilde{x}_{t,k})| + B_{G \cap S_t} \sum^{|G \cap S_t|}_{k=1} \bigg | \frac1{\zeta} \sqrt{\beta_{t}} {\sigma_2}_{\llbracket t-1 \rrbracket} (\tilde{x}_{t,k}) \bigg | \label{eq7}\\
	& \leq \left ( \frac{\zeta+1}{\zeta} \right ) B (\sqrt{\beta_{t}}) \sum^{|G \cap S_t|}_{k=1}   |{\sigma_2}_{\llbracket t-1 \rrbracket} (\tilde{x}_{t,k})| \label{eq8} ~,
\end{align}
where \eqref{eq6} follows from the Lipschitz continuity of $v$; \eqref{eq7} follows from the definition of index and the triangle inequality; for \eqref{eq8} we use Lemma \ref{lem:high_prob_delta} and the definition of $B$.
    
\end{proof}

Next, we upper bound the gap of a selected super arm in round $t$ (aka simple regret) in terms of the posterior variances of base arms. Hereafter, we use the notation $\overline{x}_{t,k}$ to denote the context $x_{t,s_{t,k}}$ of the \revn{$k$th} selected base arm $s_{t,k}$ at round $t$ for convenience, unless otherwise stated. Note that $S^*_t \in \argmax_{S \in {\cal S}'_{t}} u(f_1(\bm{x}_{t,S}))$ is the optimal super arm in round $t$.

\begin{lem} \label{lem:alpha_optimality}
Given round $t \geq 1$, let $S^*_t = \{  s^*_{t,1},\ldots,s^*_{t,{|S^*_t|}}\}$ denote the optimal super arm in round $t$. Then, the following argument holds under the event ${\cal F}_1$:
\begin{align}
	\alpha \cdot u(f_1(\bm{x}_{t,S^*_t})) - u(f_1(\bm{x}_{t,S_t})) \leq  \left ( \frac{2-\zeta}{1-\zeta} \right ) B' \sqrt{\beta_{t}} \sum^{|S_t|}_{k=1}   |{\sigma_1}_{\llbracket t-1 \rrbracket} (\overline{x}_{t,k})|
\end{align}
\end{lem}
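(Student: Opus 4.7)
The plan is to mirror the structure of Lemma~\ref{lem:group_threshold}, but using the super-arm index $i_t$, the $\alpha$-approximate oracle Oracle$_{\text{spr}}$, and the fact (from Lemma~\ref{lem:subset}) that the true optimal super arm $S^*_t$ is still a feasible candidate for Oracle$_{\text{spr}}$. The four ingredients I will combine are: (i) Lemma~\ref{lem:UCB_indices} to get $i_t \geq f_1$ pointwise under $\mathcal{F}_1$; (ii) Lemma~\ref{lem:subset} to put $S^*_t$ inside $\hat{\mathcal{S}}'_t$; (iii) the $\alpha$-approximation guarantee of Oracle$_{\text{spr}}$; and (iv) the monotonicity and Lipschitz assumptions (Assumptions~\ref{ass:s_reward_mono} and \ref{ass:s_reward_lip}) on $u$.

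First I would ``turn'' the quantity $\alpha \cdot u(f_1(\bm{x}_{t,S^*_t}))$ into a statement about indices. Since $S^*_t \in \mathcal{S}'_t$, Lemma~\ref{lem:subset} gives $S^*_t \in \hat{\mathcal{S}}'_t$, so Oracle$_{\text{spr}}$ could have returned $S^*_t$. By its $\alpha$-approximation property,
\begin{align*}
u(i_t(\bm{x}_{t,S_t})) \;\geq\; \alpha \cdot \max_{S \in \hat{\mathcal{S}}'_t} u(i_t(\bm{x}_{t,S})) \;\geq\; \alpha \cdot u(i_t(\bm{x}_{t,S^*_t})) \;\geq\; \alpha \cdot u(f_1(\bm{x}_{t,S^*_t})),
\end{align*}
where the last inequality uses $i_t \geq f_1$ coordinatewise together with Assumption~\ref{ass:s_reward_mono}. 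Subtracting $u(f_1(\bm{x}_{t,S_t}))$ from both sides reduces the claim to bounding $u(i_t(\bm{x}_{t,S_t})) - u(f_1(\bm{x}_{t,S_t}))$.

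Next I would apply Lipschitz continuity (Assumption~\ref{ass:s_reward_lip}) to obtain
\begin{align*}
u(i_t(\bm{x}_{t,S_t})) - u(f_1(\bm{x}_{t,S_t})) \;\leq\; B' \sum_{k=1}^{|S_t|} \bigl| i_t(\overline{x}_{t,k}) - f_1(\overline{x}_{t,k}) \bigr|,
\end{align*}
and then split each summand by the triangle inequality via the definition $i_t(x) = \mu_{1,\llbracket t-1 \rrbracket}(x) + \frac{1}{1-\zeta}\sqrt{\beta_t}\,\sigma_{1,\llbracket t-1 \rrbracket}(x)$. Under $\mathcal{F}_1$, Lemma~\ref{lem:high_prob_delta} bounds $|f_1 - \mu_{1,\llbracket t-1 \rrbracket}|$ by $\sqrt{\beta_t}\,\sigma_{1,\llbracket t-1 \rrbracket}$, and the remaining confidence term contributes $\frac{1}{1-\zeta}\sqrt{\beta_t}\,\sigma_{1,\llbracket t-1 \rrbracket}$, so the prefactor collapses to $1 + \frac{1}{1-\zeta} = \frac{2-\zeta}{1-\zeta}$, exactly matching the stated bound.

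The only genuinely non-routine step is step (ii): without Lemma~\ref{lem:subset}, the $\alpha$-optimality guarantee would compare the chosen super arm to $\max_{S \in \hat{\mathcal{S}}'_t}$, which need not include $S^*_t$, and the argument would break. Everything else is bookkeeping with triangle inequalities and the monotonicity/Lipschitz assumptions, so I expect the writeup to be short, essentially parallel to Lemma~\ref{lem:group_threshold} with $(\zeta, i'_t, f_2, v, B)$ replaced by $(1-\zeta, i_t, f_1, u, B')$.
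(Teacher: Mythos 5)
Your proposal is correct and follows essentially the same route as the paper's proof: pass from $f_1$ to the index $i_t$ on $S^*_t$ via monotonicity and Lemma~\ref{lem:UCB_indices}, use Lemma~\ref{lem:subset} to place $S^*_t$ in $\hat{\mathcal{S}}'_t$ so the $\alpha$-approximation guarantee of Oracle$_{\text{spr}}$ applies, and then control $u(i_t(\bm{x}_{t,S_t})) - u(f_1(\bm{x}_{t,S_t}))$ by Lipschitz continuity, the triangle inequality, and the confidence bound, yielding the $\frac{2-\zeta}{1-\zeta}$ factor. The only cosmetic difference is that the paper names the index-maximizer $H_t$ explicitly, and (like your write-up) it implicitly invokes the event $\mathcal{F}_2$ through Lemma~\ref{lem:subset} even though the lemma is stated only under $\mathcal{F}_1$.
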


\textit{Proof.} We define $H_t= \argmax_{S\in \mathcal{\hat{S}}'_t} u(i_t(\bm{x}_{t,S}))$. Given that event ${\cal F}_1$ holds, we have:
\begin{align}
	\alpha \cdot u(f_1(\bm{x}_{t,S^*_t})) - u(f_1(\bm{x}_{t,S_t})) & \leq \alpha \cdot u(i_t(\bm{x}_{t,S^*_t})) - u(f_1(\bm{x}_{t,S_t})) \label{eq9}\\
	& \leq \alpha \cdot u(i_t(\bm{x}_{t,H_t})) - u(f_1(\bm{x}_{t,S_t}))\label{eq10}\\
	& \leq u(i_t(\bm{x}_{t,S_t})) - u(f_1(\bm{x}_{t,S_t}))\label{eq11}\\
	& \leq B' \sum^{|S_t|}_{k=1} | i_t(\overline{x}_{t,k})  - f_1(\overline{x}_{t,k})|\label{eq12}\\
	& \leq B' \sum^{|S_t|}_{k=1} |{\mu_1}_{\llbracket t-1 \rrbracket}(\overline{x}_{t,k}) - f_1(\overline{x}_{t,k})| + B' \sum^{|S_t|}_{k=1} \bigg | \frac1{1-\zeta} (\sqrt{\beta_{t}}) {\sigma_1}_{\llbracket t-1 \rrbracket} (\overline{x}_{t,k}) \bigg | \label{eq13}\\
	& \leq \left ( \frac{2-\zeta}{1-\zeta} \right ) B' \sqrt{\beta_{t}} \sum^{|S_t|}_{k=1}   |{\sigma_1}_{\llbracket t-1 \rrbracket} (\overline{x}_{t,k})| \label{eq14} ~,
\end{align}
where \eqref{eq9} follows from monotonicity of $u$ and the fact that $f_1(x_{t,s^*_{t,k}}) \leq i_t(x_{t,s^*_{t,k}})$, for $k\leq |S^*_t|$ (Lemma \ref{lem:UCB_indices}); \eqref{eq10} follows from the definition of $H_t$ and the fact that ${\cal S}'_t \subseteq \mathcal{\hat{S}}'_t$ on event ${\cal F}_2$ (Lemma \ref{lem:subset}), in other words, $ \max_{S\in \mathcal{\hat{S}}'_t} u(i_t(\bm{x}_{t,S})) \geq \max_{S\in \mathcal{S}'_t} u(i_t(\bm{x}_{t,S}))$ since ${\cal S}'_t \subseteq \mathcal{\hat{S}}'_t$; \eqref{eq11} holds since $S_t$ is the super arm chosen by the $\alpha$-approximation oracle; \eqref{eq12} follows from the Lipschitz continuity of $u$; \eqref{eq13} follows from the definition of index and the triangle inequality; for \eqref{eq14} we use Lemma \ref{lem:high_prob_delta}. 

Before proving our theorems, we prove our last lemma which enables us to have our regrets bounds in terms of the information gain. We note that both of our oracles are deterministic and our algorithm doesn't give any extra randomization.

\begin{lem} \label{lem:info_gain}
(Lemma 3 of \cite{nika2021gp}) Let $\bs{z}_t : = \bs{x}_{t,S_t}$ be the vector of selected contexts at time $t\geq 1$. Given $T\geq 1$, we have:
	\begin{align*}
		I_j\left( r_j(\bs{z}_{[T]});f_j(\bs{z}_{[T]})\right)  \geq \frac{1}{2(\sigma^{-2}\lambda^*(K)+1)} \sum^T_{t=1} \sum^{\card{S_t}}_{k=1} \sigma^{-2}\sigma^2_{j_{\llbracket t-1 \rrbracket}}(\overline{x}_{t,k}) ~,
	\end{align*}
	where $\bs{z}_{[T]} = [\bs{z}_1,\ldots,\bs{z}_T]^T$ is the vector of all selected contexts until round $T$ and $\lambda^*$ is the maximum eigenvalue of matrix $(\Sigma_{\llbracket t-1 \rrbracket}(\bs{z}_{[T]}))^T_{t=1}$.
 \end{lem}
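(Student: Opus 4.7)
The plan is to establish this bound by combining the chain-rule decomposition of mutual information with a standard log-determinant-to-trace spectral inequality, adapting the classical single-observation-per-round GP-UCB argument of Srinivas et al. to our combinatorial setting where $|S_t|$ base arms are observed jointly at each round.

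First, I would exploit the sequential structure of observations. In round $t$ the learner observes the vector of outcomes $r_j(\bs{z}_t)$ with $\bs{z}_t = \bs{x}_{t,S_t}$, arriving after the observations from rounds $1,\ldots,t-1$. Applying the chain rule for mutual information yields
\begin{align*}
I_j\bigl(r_j(\bs{z}_{[T]}); f_j(\bs{z}_{[T]})\bigr) = \sum_{t=1}^T I_j\bigl(r_j(\bs{z}_t); f_j(\bs{z}_t) \,\big|\, r_j(\bs{z}_{<t})\bigr).
\end{align*}
For a GP prior with independent Gaussian observation noise of variance $\sigma^2$, each conditional term admits a closed form: conditioning on the previous observations replaces the prior with the posterior GP whose covariance at $\bs{z}_t$ is exactly $\Sigma_{\llbracket t-1 \rrbracket}(\bs{z}_t)$, so
\begin{align*}
I_j\bigl(r_j(\bs{z}_t); f_j(\bs{z}_t) \,\big|\, r_j(\bs{z}_{<t})\bigr) = \tfrac{1}{2}\log\det\!\bigl(\bs{I}_{|S_t|} + \sigma^{-2}\Sigma_{\llbracket t-1 \rrbracket}(\bs{z}_t)\bigr).
\end{align*}

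Second, I would lower-bound each log-determinant spectrally. Every eigenvalue $\mu_i$ of $\sigma^{-2}\Sigma_{\llbracket t-1 \rrbracket}(\bs{z}_t)$ lies in $[0, \sigma^{-2}\lambda^*(K)]$ by the very definition of $\lambda^*(K)$. Combining the elementary inequality $\log(1+x) \geq x/(1+x)$ for $x\geq 0$ with this uniform upper bound gives $\log(1+\mu_i) \geq \mu_i/(1+\sigma^{-2}\lambda^*(K))$, and summing over eigenvalues yields
\begin{align*}
\log\det\!\bigl(\bs{I} + \sigma^{-2}\Sigma_{\llbracket t-1 \rrbracket}(\bs{z}_t)\bigr) \geq \frac{\Tr\!\bigl(\sigma^{-2}\Sigma_{\llbracket t-1 \rrbracket}(\bs{z}_t)\bigr)}{1+\sigma^{-2}\lambda^*(K)} = \frac{\sigma^{-2}\sum_{k=1}^{|S_t|}\sigma^2_{j_{\llbracket t-1 \rrbracket}}(\overline{x}_{t,k})}{1+\sigma^{-2}\lambda^*(K)},
\end{align*}
where the equality uses the fact that the trace of a posterior covariance matrix restricted to a set of points equals the sum of the posterior variances at those points. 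Summing over $t=1,\ldots,T$ and substituting into the chain-rule identity produces exactly the claimed bound.

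The main subtlety I foresee is justifying the uniform eigenvalue bound: one must ensure a single constant $\lambda^*(K)$ dominates the spectrum of $\Sigma_{\llbracket t-1 \rrbracket}(\bs{z}_t)$ uniformly over $t \leq T$, which is built into the theorem's definition of $\lambda^*(K)$ as the maximum over rounds of the maximum eigenvalue. Everything else is routine once the per-round decomposition is in place; the two key structural facts being used are that (i) GP posteriors admit a clean sequential decomposition of information, and (ii) per-round log-determinants can be converted to sums of posterior variances through the scalar inequality $\log(1+x) \geq x/(1+x)$ applied eigenvalue-wise.
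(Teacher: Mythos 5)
Your proof is correct and follows essentially the same route as the source the paper delegates to (the paper itself only cites Lemma 3 of Nika et al. rather than reproving it): the chain rule turns the total information gain into a sum of per-round terms $\tfrac{1}{2}\log\det(\bs{I}+\sigma^{-2}\Sigma_{\llbracket t-1\rrbracket}(\bs{z}_t))$, and the eigenvalue-wise inequality $\log(1+x)\geq x/(1+x)\geq x/(1+\sigma^{-2}\lambda^*(K))$ converts each log-determinant into the trace, i.e., the sum of posterior variances. All steps, including the identification of the conditional mutual information with the batch posterior covariance and the uniform eigenvalue bound via the definition of $\lambda^*(K)$, are sound.
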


\subsection{\revn{Proof of Theorem \ref{thm:regret}}}

    From Lemma \ref{lem:group_threshold} we have:
\begin{align}
	R_g (T) & = \sum^T_{t=1} \sum_{G \in \tilde{{\cal G}}_t} [\gamma_{t, G} - v(f_2(\bm{x}_{t, G \cap S_t}))]_{+} \nonumber \\
	& \leq \left ( \frac{\zeta+1}{\zeta} \right ) B \sqrt{\beta_{T}} \sum^T_{t=1} \sum_{G \in {\cal G}_t} \sum^{|G \cap S_t|}_{k=1}   |{\sigma_2}_{\llbracket t-1 \rrbracket} (\tilde{x}_{t,k})| \nonumber \\
	& \leq  \left ( \frac{\zeta+1}{\zeta} \right ) B \sqrt{\beta_{T}} \sum^T_{t=1} \sum^{|S_t|}_{k=1}   |{\sigma_2}_{\llbracket t-1 \rrbracket} (\overline{x}_{t,k})| \label{eq17} ~,
\end{align}

using the fact that $\sqrt{\beta_{t}}$ is monotonically increasing in $t$. Also, we changed the notation of $\tilde{x}_{t,k}$ with $\overline{x}_{t,k}$ as we are summing through all the base arms in $S_t$ in \eqref{eq17}. We have:
\begin{align}
	R^2_{g}(T) & \leq \left ( \frac{\zeta+1}{\zeta} \right )^2 B^2\beta_{T} \left( \sum^T_{t=1} \sum^{|S_t|}_{k=1} |{\sigma_2}_{\llbracket t-1 \rrbracket}(\overline{x}_{t,k}) | \right)^2 \label{eq18} \\
	& \leq \left ( \frac{\zeta+1}{\zeta} \right )^2 B^2\beta_{T} T \sum^T_{t=1} \left(  \sum^{|S_t|}_{k=1} |{\sigma_2}_{\llbracket t-1 \rrbracket}(\overline{x}_{t,k}) |\right)^2 \label{eq19}\\
	& \leq \left ( \frac{\zeta+1}{\zeta} \right )^2 B^2\beta_{T} T\sum^T_{t=1} |S_t|\sum^{|S_t|}_{k=1} \left( {\sigma_2}_{\llbracket t-1 \rrbracket}(\overline{x}_{t,k}) \right)^2 \label{eq20}\\
	& \leq \left ( \frac{\zeta+1}{\zeta} \right )^2 B^2\beta_{T} TK\sum^T_{t=1} \sum^{|S_t|}_{k=1} \left( {\sigma_2}_{\llbracket t-1 \rrbracket}(\overline{x}_{t,k}) \right)^2 \nonumber\\
	& \leq \left ( \frac{\zeta+1}{\zeta} \right )^2 B^2\beta_{T} TK \sigma^2 \sum^T_{t=1} \sum^{|S_t|}_{k=1} \sigma^{-2} \left( {\sigma_2}_{\llbracket t-1 \rrbracket}(\overline{x}_{t,k}) \right)^2 \label{eq21}\\
	& \leq 2(\sigma^{-2}\lambda^*(K) + 1)\left ( \frac{\zeta+1}{\zeta} \right )^2 B^2\beta_{T} TK \sigma^2 I\left(  r_2(\bs{z}_{[T]});  f_2(\bs{z}_{[T]})\right) \label{eq:th1_use_lemma_6}\\
	& \leq C_1(K) K \beta_{T}T\overline{\gamma_2}_{T}\label{eq24}~,
\end{align}
where for \eqref{eq19} and \eqref{eq20} we have used the \revn{Cauchy-Schwarz} inequality twice; in \eqref{eq21} we just multiply by $\sigma^2$ and $\sigma^{-2}$; \eqref{eq:th1_use_lemma_6} follows from Lemma \ref{lem:info_gain} and for \eqref{eq24} we use the definition of $\overline{\gamma_2}_{T}$. Taking the square root of both sides we obtain our desired result.

From Lemma \ref{lem:alpha_optimality} we have:
\begin{align}
	R_s (T) & =  \alpha  \sum^T_{t=1}   \textup{opt}(f_t ) -  \sum^T_{t=1} u(f_1(  \bm{x}_{t,S_t}  )) \nonumber \\
	& \leq  	\left ( \frac{2-\zeta}{1-\zeta} \right ) B' \sqrt{\beta_{T}} \sum^T_{t=1} \sum^{|S_t|}_{k=1}   |{\sigma_1}_{\llbracket t-1 \rrbracket} (\overline{x}_{t,k})| \label{eq25} ~,
\end{align}
using the fact that $\sqrt{\beta_{t}}$ is monotonically increasing in $t$. We have:
\begin{align}
	R^2_{s}(T) & \leq \left ( \frac{2-\zeta}{1-\zeta} \right )^2 (B')^2\beta_{T} \left( \sum^T_{t=1} \sum^{|S_t|}_{k=1} |{\sigma_1}_{\llbracket t-1 \rrbracket}(\overline{x}_{t,k}) | \right)^2 \label{eq26}
\end{align}
The middle steps are the same as \eqref{eq19}-\eqref{eq:th1_use_lemma_6} except that we have $\left ( \frac{2-\zeta}{1-\zeta} \right )^2 (B')^2 \beta_{T}$ instead of $\left ( \frac{\zeta+1}{\zeta} \right )^2 B^2 \beta_{T}$ as the constant multiplier. Also, we use $\sigma_1$ instead of $\sigma_2$. Hence, the last steps are modified as:
\begin{align}
	R^2_{s}(T) & \leq 2(\sigma^{-2}\lambda^*(K) + 1) \left ( \frac{2-\zeta}{1-\zeta} \right )^2 (B')^2\beta_{T} T K \sigma^2 I\left(  r_1(\bs{z}_{[T]});  f_1(\bs{z}_{[T]})\right) \nonumber \\
	& \leq C_2(K) K\beta_{T} T \overline{\gamma_1}_{T} \nonumber ~,
\end{align}

Taking the square root of both sides we obtain our desired result. Finally, in order to obtain the total regret we use:
\begin{align}
	R(T) & = \zeta R_{g}(T) + (1-\zeta) R_{s}(T) \nonumber \\
	& \leq \left ( \zeta \sqrt{C_1(K)} + (1-\zeta) \sqrt{C_2(K)} \right) \sqrt{\beta_T KT \overline{\gamma}_T} \nonumber
\end{align}
where $\overline{\gamma}_T = \max \{\overline{\gamma_1}_{T}, \overline{\gamma_2}_T\} $. In order to eliminate the $\zeta$ dependence we modify this expression as follows:
\begin{align}
	R(T) & \leq \Bigg ( \zeta \sqrt{2 B^2 (\frac{\zeta+1}{\zeta})^2  (\lambda^*(K) + \sigma^{2})} + (1-\zeta) \sqrt{2 (B')^2 (\frac{2-\zeta}{1-\zeta})^2  (\lambda^*(K) + \sigma^{2})} \Bigg ) \sqrt{\beta_T KT \overline{\gamma}_T} \label{eq27} \\
	& = \Bigg ( \frac{\zeta}{|\zeta|} \sqrt{2 B^2 (\zeta+1)^2  (\lambda^*(K) + \sigma^{2})} + \frac{1-\zeta}{|1-\zeta|} \sqrt{2 (B')^2 (2-\zeta)^2  (\lambda^*(K) + \sigma^{2})} \Bigg ) \sqrt{\beta_T KT \overline{\gamma}_T} \nonumber \\
	& = \Bigg ( \sqrt{2 B^2 (\zeta+1)^2 (\lambda^*(K) + \sigma^{2})} + \sqrt{2 (B')^2 (2-\zeta)^2 (\lambda^*(K) + \sigma^{2})} \Bigg ) \sqrt{\beta_T KT \overline{\gamma}_T} \label{eq28} \\
	& = \sqrt{2 (\lambda^*(K) + \sigma^{2})} \bigg (|B| |\zeta+1| + |B'| |2-\zeta| \bigg ) \sqrt{\beta_T KT \overline{\gamma}_T} \nonumber \\
	& = \sqrt{2 (\lambda^*(K) + \sigma^{2})} \bigg (B(\zeta+1) + B'(2-\zeta) \bigg) \sqrt{\beta_T KT \overline{\gamma}_T} \label{eq29} \\
	& \leq \sqrt{2 (\lambda^*(K) + \sigma^{2})} \bigg (2B + 2B' \bigg) \sqrt{\beta_T KT \overline{\gamma}_T} \label{eq30} \\
	& = \sqrt{C(K) \beta_T KT \overline{\gamma}_T} \nonumber ~.
\end{align}
where \eqref{eq27} follows from writing the expressions of $C_1$ and $C_2$ to the required places; \eqref{eq28} follows from $\zeta \in [0, 1]$; \eqref{eq29} follows from the assumptions that $B > 0$ and $B' > 0$ and also from $\zeta \in [0, 1]$ and \eqref{eq30} comes from writing the maximizing $\zeta$ values in $\zeta + 1 $ and $2 - \zeta$.

\subsection{\revn{Proof of Theorem \ref{thm: gamma_bar_bound}}}

The proof is the same as that of Theorem 2 of \cite{nika2021gp}. 


\subsection{\revn{Proof of Theorem \ref{thm:fixed}}}

\revn{We first state an auxiliary fact that will be used in the proof (for a proof see \citep{williams2000upper}).}

\begin{fac} \label{fac:pred_var} The predictive variance of a given context  is monotonically non-increasing (i.e., given $x\in {\cal X}$ and the vector of selected samples $[x_1,\ldots,x_{N+1}]$, we have $\sigma^2_{N+1}(x) \leq \sigma^2_N(x)$, for any $N\geq 1$).
\end{fac}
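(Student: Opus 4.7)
The plan is to give a short proof based on the law of total variance applied to the GP posterior. The key observation is that for a Gaussian process with Gaussian observation noise, the posterior variance $\sigma^2_N(x) = \mathrm{Var}(f(x) \mid r(x_1), \ldots, r(x_N))$ is a deterministic function of the query context $x$ and the sample locations $(x_1, \ldots, x_N)$; it does not depend on the realized values of the observations. This is a standard consequence of the explicit formula $\sigma^2_N(x) = k(x,x) - k_N(x)^\top (K_N + \sigma^2 I)^{-1} k_N(x)$ stated earlier in the paper (applied component-wise to each output of the two-output GP, since each scalar marginal is itself a GP).

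With that in hand, I would condition $f(x)$ on the first $N$ observations and apply the tower rule for conditional variance, writing $r_{1:k} := (r(x_1), \ldots, r(x_k))$:
\[
\sigma^2_N(x) = \mathbb{E}\!\left[\mathrm{Var}(f(x) \mid r_{1:N+1}) \mid r_{1:N}\right] + \mathrm{Var}\!\left(\mathbb{E}[f(x) \mid r_{1:N+1}] \mid r_{1:N}\right).
\]
The inner variance in the first term is exactly $\sigma^2_{N+1}(x)$, and by the determinism property above it is unaffected by the outer conditional expectation, so the first term equals $\sigma^2_{N+1}(x)$. The second term is a variance, hence non-negative. Rearranging gives $\sigma^2_{N+1}(x) \leq \sigma^2_N(x)$.

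A purely algebraic alternative would be to use the Schur-complement identity for $(K_{N+1} + \sigma^2 I)^{-1}$ in terms of $(K_N + \sigma^2 I)^{-1}$, which yields the rank-one update
\[
\sigma^2_{N+1}(x) = \sigma^2_N(x) - \frac{\bigl(k(x, x_{N+1}) - k_N(x)^\top (K_N + \sigma^2 I)^{-1} k_N(x_{N+1})\bigr)^2}{\sigma^2 + \sigma^2_N(x_{N+1})},
\]
and monotonicity is immediate since the subtracted term is a square divided by a strictly positive denominator. I do not expect any real obstacle; the only care needed in the algebraic route is verifying the Schur-complement expansion, while the probabilistic route relies only on the observation-independence of GP posterior variances.
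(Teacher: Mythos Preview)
Your proposal is correct. Both routes you sketch are standard and sound: the probabilistic argument via the law of total conditional variance works exactly as you describe (the crucial point being that the GP posterior variance is a deterministic function of the design locations, so the outer expectation in the first term collapses), and the algebraic route via the Schur complement of $K_{N+1}+\sigma^2 I$ yields precisely the rank-one update you wrote, from which non-increase is immediate.

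The paper, however, does not actually prove this fact at all; it simply states it and cites \cite{williams2000upper} for a proof. So your contribution is genuinely different in that you supply a self-contained argument where the paper defers to an external reference. Of your two routes, the law-of-total-variance argument is the more conceptual and requires essentially no computation, while the Schur-complement route has the advantage of giving an explicit expression for the decrement $\sigma_N^2(x)-\sigma_{N+1}^2(x)$, which can be useful elsewhere (e.g., in information-gain calculations). Either would be a perfectly acceptable replacement for the citation.
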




The proof of Theorem \ref{thm:fixed} easily follows from Theorem \ref{thm: gamma_bar_bound} and this fact.

\subsection{\revn{Proof of Corollary \ref{cor:kernel_bounds}}}


    This is a direct application of the explicit bounds on $\gamma_T$ given in Theorem 5 of  \citep{srinivas2012information} to the bound we obtained in \cref{thm:fixed}. For the Matérn kernel, we have used the tighter bounds from \citep{vakili2020information}. 


\end{document}